\newtheorem{theorem}{Theorem}
\newtheorem{remark}{Remark}
\newtheorem{proof}{Proof}[section]
\useunder{\uline}{\ul}{}
\def\BibTeX{{\rm B\kern-.05em{\sc i\kern-.025em b}\kern-.08em
    T\kern-.1667em\lower.7ex\hbox{E}\kern-.125emX}}
\begin{document}
\title{AoI-based Temporal Attention Graph Neural Network for Popularity Prediction and Content Caching}
\author{\IEEEauthorblockN{Jianhang Zhu, Rongpeng Li, Guoru Ding, Chan Wang, Jianjun Wu, Zhifeng Zhao, and Honggang Zhang}

\thanks{
   J. Zhu, R. Li, C. Wang and H. Zhang are with the College of Information Science and Electronic Engineering, Zhejiang University, Hangzhou 310027, China (e-mail: \{zhujh20, lirongpeng, 0617464, honggangzhang\}@zju.edu.cn).
   
   G. Ding is College of Communications and Engineering, Army Engineering University of PLA, Nanjing China (e-mail: dr.guoru.ding@ieee.org).
   
   J. Jun is with Huawei Technologies Company, Ltd., Shanghai 201206, China. (e-mail: wujianjun@huawei.com).
   
   Z. Zhao is with Zhejiang Lab, Hangzhou, China as well as the College of Information Science and Electronic Engineering, Zhejiang University, Hangzhou 310027, China (e-mail: zhaozf@zhejianglab.com).
   
  }

}

\maketitle

\begin{abstract}
Along with the fast development of network technology and the rapid growth of network equipment, the data throughput is sharply increasing. To handle the problem of backhaul bottleneck in cellular network and satisfy people's requirements about latency, the network architecture like information-centric network (ICN) intends to proactively keep limited popular content at the edge of network based on predicted results. Meanwhile, the interactions between the content (e.g., deep neural network models, Wikipedia-alike knowledge base) and users could be regarded as a dynamic bipartite graph. In this paper, to maximize the cache hit rate, we leverage an effective dynamic graph neural network (DGNN) to jointly learn the structural and temporal patterns embedded in the bipartite graph. Furthermore, in order to have deeper insights into the dynamics within the evolving graph, we propose an age of information (AoI) based attention mechanism to extract valuable historical information while avoiding the problem of message staleness. Combining this aforementioned prediction model, we also develop a cache selection algorithm to make caching decisions in accordance with the prediction results. Extensive results demonstrate that our model can obtain a higher prediction accuracy than other state-of-the-art schemes in two real-world datasets. The results of hit rate further verify the superiority of the caching policy based on our proposed model over other traditional ways.
\end{abstract}

\begin{IEEEkeywords}
content caching, popularity prediction, dynamic graph neural network, age of information
\end{IEEEkeywords}

\section{Introduction}
\IEEEPARstart{G}{iven} the galloping number of users and mobile equipment \cite{cisco2020cisco}, the amount of data sharply surges and the wireless access points at the network edge confront the frequent congestion. Generally, besides video streaming, provisioning artificial intelligence (AI) and other dedicated network functions services are becoming the dominant factors that steer this explosion. Therefore, how to bring a better quality of user experiences (QoE) and quality of service (QoS) to users in the sharply growing data traffic under a constrained backhaul link is an intractable problem we have to face. Some resource-devouring approaches, such as higher frequency reuse, a larger scale of antennas or setting more bandwidth, can tackle this problem by increasing the capacity of cellular networks, but most of them fail to offer a durable solution in terms of scalability, costs and flexibility \cite{li2017cooperative}. On the other hand, some studies \cite{gu2020distributed, cha2007tube} point out that a tremendous data load comes from the repeated requests for a few same popular targets, especially the multimedia services at the edge, so does the scenario of AI service, in which the same AI samples or trained models usually may devote to numerous applications \cite{liao2020cognitive}. Therefore, aiming at dealing with these repetitive actions by storing part of fashionable content (e.g., deep neural network models, videos) at the network edge, it can obviously alleviate the backhaul traffic burden caused by the data explosion and greatly reduce the transmission delay or other issues \cite{yang2018content}. So there is a growing consensus that edge caching will play a prominent role in future communication systems and networks \cite{paschos2018role}.

\begin{figure}[t] 
\centering  
\includegraphics[scale = 0.48]{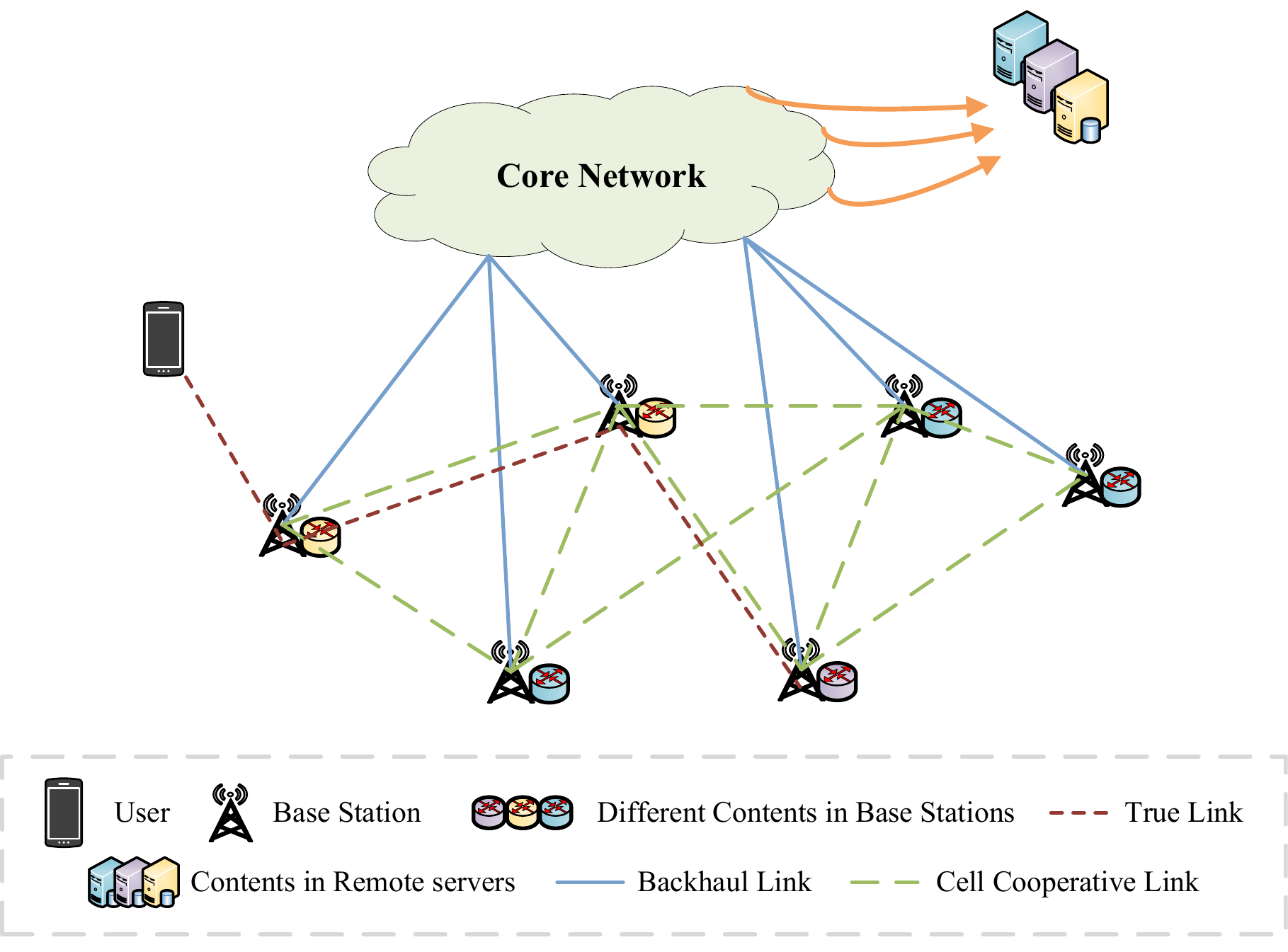}
\caption{Mobile edge caching in ICN.}  
\label{icn_fig}
\end{figure}

The demands for a more efficient and much simpler content distribution method have motivated the emergence of a new architecture called information-centric network (ICN) \cite{zhang2019sdn}. In contrast to the inefficient way like IP addressing, clients in ICN are able to directly access content pieces from the network only by its unique named data objects (NDO) \cite{lederer2014adaptive}. As illustrated in Fig. \ref{icn_fig}, ICN can easily satisfy requests by any edge node if the node holds a copy with the exact NDO in its in-network storage \cite{ahlgren2012survey}. Practically, due to the limited cache space \cite{somuyiwa2018reinforcement}, we are only able to recommend those content with distinguished cost performance to ICN's in-network storage. Ideally, we should proactively cache the most popular content. Most of the existing caching strategies always assume that the content popularity remains stable during a long period, while it actually varies over time \cite{chen2020content}. For example, some traditional caching strategies like Least Recently Used (LRU) and Least Frequently Used (LFU) \cite{lee2001lrfu} are partial to extract the superficial periodic law from historical information and ignore the dynamic characteristics of requests themselves \cite{dan1990approximate}. To enhance the caching performance, it becomes an incentive to lucubrate the dynamics to establish a popularity prediction model.

In addition to the dynamics of historical requests, we also believe a well-adopted popularity prediction model ought to predict from users' perspective and excavate the structural pattern within data as well. In other words, it is inspiring to exploit the implications from users with similar preferences when speculating some inactive entities' predilections for comprehensive popularity anticipation. Recently, some researchers regard user-content pairs in recommendation system (RS) as a bipartite graph \cite{zhou2020graph} and propose to utilize graph neural network (GNN), such as Graph Attention Network (GAT) \cite{velivckovic2017graph}, to dig out structured data. They demonstrate that even without dynamical information, GNN model also wins excellent performance in RS \cite{fan2019graph}. Inspired by these works \cite{zhou2020graph, velivckovic2017graph, fan2019graph}, interactions between the content and users could be also regarded as a dynamic bipartite graph when we attempt to predict their popularity in ICN. However, it is also non-negligible that most real-life graphs are always evolving, and the ignorance of time-varying nature in these aforementioned approaches make them still far from perfection \cite{Xu2020Inductive}. 

In order to realize the learning of structure and dynamics simultaneously, the dynamic graph neural network (DGNN) has been proposed. But existing approaches remain room to improve. In this article, we primarily leverage a modified continuous DGNN (CDGNN) model to learn the structural and temporal pattern in the dynamic bipartite graph of users and requested content. Specifically, we focus on discovering how to abstract temporal features and how many historical messages we should utilize while mining the dynamic features. To solve the above issues, we introduce age of information (AoI), a metric quantifying the freshness of data \cite{sun2017update}, to guide the selection of fresh information and a multi-head attention mechanism is employed to refine temporal characteristics. Both of them contribute to the generation of more precise representations for users' preference predictions. Afterwards, a strategy, which relies on the results of CDGNN is proposed to guide the caching. Overall, the main contributions of this paper are as follows:

\begin{itemize}
\item To forecast users’ preferences precisely, a CDGNN model is used to simultaneously excavate the structural and dynamical patterns of the bipartite graph for all users and design a caching policy on top of the CDGNN model.
\item We develop an AoI-based temporal attention graph neural network (ATAGNN) method by innovatively introducing the AoI concept and incorporating the attention mechanism on top of the GNN model to effectively mine the temporal features in the dynamic graph. The combination of AoI and GNN may be of independent interest to the GNN community.
\item Extensive simulation results also manifest the prediction accuracy of our method and confirm the superiority over other deep learning models. Meanwhile, we testify the strategy relying on our ATAGNN model's effectiveness within different cache spaces, updating periods. The results of caching hit rate show our scheme significantly surpasses the performance of traditional policies like LRU and LFU.
\end{itemize}

The reminder of this paper is organized as follows: The related work and background are introduced in Section \ref{sec2}. We present the system model in Section \ref{sec3}. The details of our proposed AoI-based temporal graph neural network with time generalization are delivered in Section \ref{sec4}. In Section \ref{sec5}, we provide the numerical analysis and the results of prediction. Finally, the conclusion is summarized in Section \ref{sec6}.

\section{Related Work}\label{sec2}

Traditional caching policies, LRU and LFU, have obtained motivating results. Furthermore, one of their variants \cite{ming2014age} proposes to combine age of information (AoI) with LRU to achieve cooperative caching between several cache-enabled edge servers, but due to the common limitation of LRU and LFU, it still fails to mine the dynamics under the superficial statistical law. To make a dynamic caching decision, utilizing the content popularity as a reference is a commonly-adopted way. Some innovative studies based on locally deployed popularity prediction algorithms thrive in recent years \cite{wu2014scaling, zhang2017ppc, mehrizi2019feature, lee2020t}. In particular, with the development of AI algorithms, deep learning plays a non-negligible role in the popularity prediction and caching task. For instance, \cite{chen2020content} uses a feed-forward neural network (FNN) for estimating the caching threshold to assist the caching decision. \cite{DBLP:journals/corr/HidasiKBT15} proposes to use a recurrent neural network (RNN) to recommend popular content, but it is complicated to be generalized to a new dataset that has not been trained because of the limitations of RNN. What's worse, both of them fail to exploit the interdependency among all users or the structural pattern of data. In recent years, due to the excellent performance of extracting the structural pattern of graph, some studies succeed in using graph neural network (GNN) to realize the popularity recommendation by regarding users and their requested items as the components of a bipartite graph \cite{wu2019session}. In that regard, GAT is verified to effectively learn the structure-rich representations by aggregating interrelated vertexes in the recommendation graph with weights calculated by attention mechanism \cite{wang2020multi}. 

Although GNN, GAT and their variants have yielded excellent performance in representation learning and won remarkable achievements in recommendation tasks, they neglect the impact of dynamic features and still have room to improve. Therefore, GNN models with dynamics learning have been proposed to bridge the gap. For example, DySat \cite{sankar2019dynamic} and most DGNN models at the early stage achieve their dynamics extraction by sampling series snapshots from the evolving graph with equal time intervals. But the choice of sample granularity is a prominent part of model designing, and an inappropriate granularity may result in the failure of yielding a snapshot with a new effective graph structure \cite{skardinga2021foundations}. To avoid this obstacle, some continuous dynamic graph models (CDGNN), e.g., DyRep \cite{trivedi2019dyrep}, propose to complete the graph computation with event sampling. Specifically, DyRep expresses dynamic graph as the evolution of structural and node communication with a recurrent architecture. Moreover, some researchers inspired by the position encoding in Transformer \cite{vaswani2017attention} try to inject some information of interaction timestamps into the node. For instance, TGAT \cite{Xu2020Inductive} uses the harmonic processing method to obtain a time coding function. Furthermore, TGN \cite{rossi2020temporal} intends to refine the temporal signals of historical interactions by adding a memory module to the TGAT and achieves superior performance in the above DGNN models. To some extent, the improvement stems from the extraction of users' short-term and long-term preferences with the memory module. However, during aggregating history, TGN, which calculates the mean value or keeps the latest records in its memory module, is relatively preliminary. A further refinement of short-term history is necessary.

As described, aforementioned models like TGN are becoming an important tool to realize the proactive caching, but still leave questioned on how to design a much more effective information aggregation method in CDGNN. Therefore, we introduce an attention mechanism to the TGN model so as to capture both structural and
dynamical patterns. And inspired by the AoI in wireless sensor network (WSN) \cite{sun2017update}, a metric of information freshness to balance the huge data and the limited transmission capacity, we novelly introduce the concept into our neural network for selecting fresh messages adaptively.

\begin{table}[]
\renewcommand\arraystretch{1.2}
\normalsize
\centering
\caption{A Summary of Used Notations in This Paper.}
\label{n_table}
\scalebox{0.65}{%
\begin{tabular}{ll}
\hline
Symbol & Definition \\ \hline
$u_J, i_K$ & \begin{tabular}[c]{@{}l@{}}The notations of $j_{th}$ user and $k_{th}$ content, for $j\in[0,J]$ and $K\in[0, K]$;\\ $J$ is the max number of user, $K$ is the max number of content.\end{tabular} \\
$\textbf{\emph{v}}_{u_j}, \textbf{\emph{v}}_{i_k},\textbf{\emph{e}}_{jk}$ & The features of users$j$, content $k$ and their edge. \\
$\delta_p$ & The updating period of content. \\
$p_j^k(\delta_p)$ & \begin{tabular}[c]{@{}l@{}}The actual possibility of content $k$ requested by user $j$ during the \\ content updating period $\delta_p$.\end{tabular} \\
$\tilde{p}_j^k(\delta_p)$ & \begin{tabular}[c]{@{}l@{}}The possibility of content $k$ requested by user $j$ during the content \\ updating period $\delta_p$ that is calculated by our model.\end{tabular} \\
$A^k(\delta_p)$ & \begin{tabular}[c]{@{}l@{}}The total request behaviors about content $k$ for all users during the\\ updating period $\delta_p$.\end{tabular} \\
$\tilde{A}^k(\delta_p)$ & \begin{tabular}[c]{@{}l@{}}The total request behaviors about content $k$ for all users during the\\ updating period $\delta_p$ that we predicte.\end{tabular} \\
$P_I$ & The threshold value for determining whether a request will occur or not. \\
$\mathcal{C}(\delta_p)$ & The set of top-$C$ content that we predicte to be cached. \\
$\rm{F}(\cdot)$ & \begin{tabular}[c]{@{}l@{}}A multi-layer perceptron that is used to calculate the preference \\ between users and content.\end{tabular} \\
$\Delta_{jk}(t,n)$ & The age of user $j$'s $n$-th information. \\
$\Delta_{T_p}$ & \begin{tabular}[c]{@{}l@{}}The difference between the target timestamp $T_p$ to predict and the \\ most recent timestamp $T_l$ of a request in history\end{tabular} \\
$T_u$ & The period that we adopt to update memroy. \\ \hline
\end{tabular}%
}
\end{table}

\section{System Model and Problem Formulation}\label{sec3}
\subsection{System Model}
\subsubsection{Content Caching Model}
In this paper, we consider a network that contains some edge servers that provide in-network storage, e.g., the base stations (BSs). In each edge server, there are multiple users in its coverage area and these clients always request content aperiodically. To realize the goal of caching, we compute content popularity for the users within the edge server and download the most popular ones. The main purpose of our paper is to obtain a better hit rate with a well-performed popularity prediction model.

\begin{figure}[tbp]
\centering  
\includegraphics[scale = 0.475]{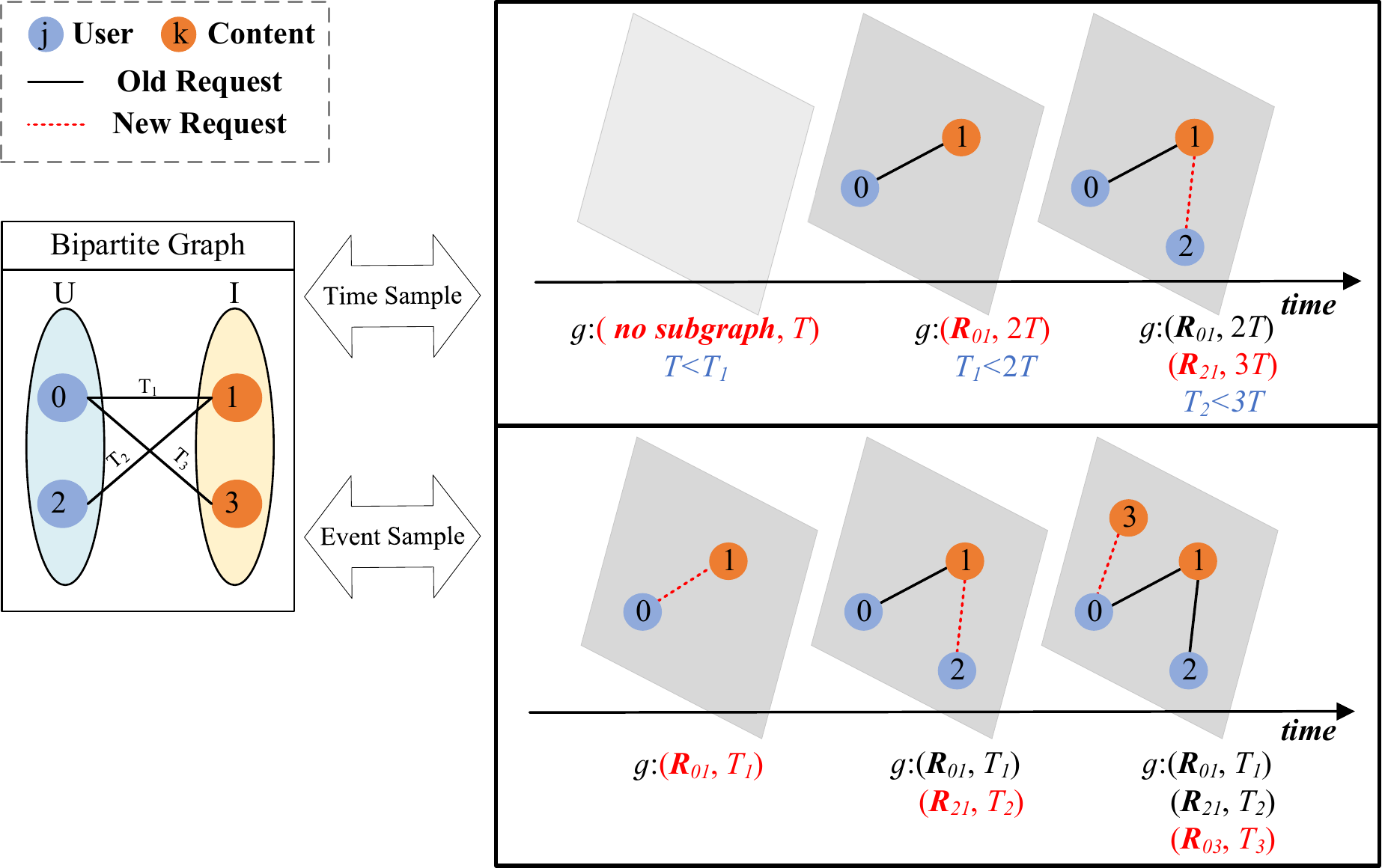}
\caption{Dynamic graph of requests with time sampling and event sampling.}  
\label{DG}
\end{figure} 

For such a user-content system with $J$ users and $K$ contents, the set of users can be indicated by $\mathcal{U} = \{u_0, u_1, \cdots ,u_J\}$ and the set of available contents in the network are identified as $\mathcal{I} = \{i_0, i_1, \cdots ,i_K\}$. For all the entities, we define their necessary raw information (e.g., the age of a user or the category of content) as the features: $\mathcal{V_U} = \{ \textbf{\emph{v}}_{u_0},\textbf{\emph{v}}_{u_1},\cdots ,\textbf{\emph{v}}_{u_J}\}$ and  $\mathcal{V_I} = \{\textbf{\emph{v}}_{i_0},\textbf{\emph{v}}_{i_1},\cdots ,\textbf{\emph{v}}_{i_K}\}$. $\textbf{\emph{v}}_{u_j}$  and $\textbf{\emph{v}}_{i_k}$, where $j \in 1\cdots J$ and $k \in 1\cdots K$, denote the original features of user $j$ and content $k$, respectively.

To accomplish the selection of future popular content, we define the possibility of content $k$ requested by user $j$ during the content updating period $\delta_P$ as $p_j^k(\delta_p)$. In this paper, we adopt DGNN model to calculate the possibility, as $\tilde{p}_j^k(\delta_p)$. Generally, since users at different time have different preferences towards the same content, there emerges an implicit evolving relationship between users and the requested content. To finish the task of caching, we need an action indicator function to indicate whether an item will be requested or not. The total request behaviors about content $k$ for all users $\mathcal{U}$ within the edge server during the updating period can be formed as:
\begin{equation}\label{eq0}
A^k(\delta_p, p_j^k) = \sum_{j \in \mathcal{U}}\textbf{1}(p_j^k(\delta_p) > P_I),\  \forall k \in\mathcal{I},
\end{equation}
where $P_I$ is the threshold value for requesting and any potential interaction with a possibility that is larger than $P_I$ will be regarded as the possible request and recorded with the function $\textbf{1}(\cdot)$. Moreover, we define the predicted request actions toward content $k$ during the updating period as $\tilde{A}^k(\delta_p, \tilde{p}^k)$.

Due to the limited storage space, we denote the maximum number of the edge server's capability as $C \geq 0$. After predicting and counting users' possible actions, we then make a popularity ranking list for content in $\mathcal{I}$ and rely on the set of top-$C$ items $\mathcal{C}(\delta_p)$ to update the cache space. Given the cached set $\mathcal{C}(\delta_p)$, the hit rate of our scheme during the content updating period $\delta_p$ can be denoted as $h(\delta_p)$, can be calculated as:
\begin{equation}\label{eq1.1}
h(\delta_p) = \frac{\sum_{k\in \mathcal{C}(\delta_p)}\tilde{A}^{k}(\delta_p, \tilde{p}_j^k)}{\sum_{k'\in \mathcal{I}}{A}^{k'}(\delta_p, p_j^k)}.
\end{equation}

\subsubsection{Dynamic Graph Model}
Note that the choice of $\mathcal{C}(\delta_p)$ is the important part for the increase of caching hit rate, and it has a positive relation with the possibility of request behavior that we need to estimate. Undoubtedly, an accurate prediction towards popularity is the key part of a caching strategy, but users and their interests are often subject to change over time, which increases its difficulty. To proactively cache content in advance, we can forecast the popularity based on the historical request information. Considering the existence of inactive users, it will be much more effective to learn from users' structural patterns. Thus, we can utilize the technique of GNN and view the whole system as an evolving graph.

We reformulate the aforementioned sets as the components of a bipartite graph $\mathcal{G}$, where users and content can be regarded as vertexes of the dynamic graph, and the interactions are naturally identified as the bipartite graph's evolving edges, $\mathcal{E} = \{\textbf{\emph{e}}_{01}, \textbf{\emph{e}}_{21},\textbf{\emph{e}}_{03},..., \textbf{\emph{e}}_{jk}\}$, where $\textbf{\emph{e}}_{jk}$ represents the vector of interaction between user $j$ and content $k$. As time goes by, correspondingly, new edges, as well as their participants, are added into the dynamic graph, as shown in the bottom of Fig. \ref{DG}. For the sake of simplicity, we don't consider the choice between multiple content providers for the same content and we also assume that all the different files have an equal size.

Furthermore, the occurrence of new requests can be deemed as the generation of edges, and for a dynamic graph, its timestamp should also be simultaneously recorded: $\mathcal{G} = \{(\textbf{\emph{R}}_{01},T_1),(\textbf{\emph{R}}_{21},T_2),(\textbf{\emph{R}}_{03},T_3),...,(\textbf{\emph{R}}_{jk},T_N)\}$, where $T_N$ denotes the occurrence timestamp of the edge, and $\textbf{\emph{R}}_{jk} = \{\textbf{\emph{v}}_{u_j}, \textbf{\emph{v}}_{i_k},\textbf{\emph{e}}_{jk}\}$. We use $\textbf{{Inf}}_{jk} = f(\textbf{\emph{v}}_{u_j},\textbf{\emph{v}}_{i_k},\textbf{\emph{e}}_{jk})\in\mathbb{R}^d$ to demonstrate an interaction event, where $f(\cdot)$ is the concatenation function, to denote the message between user $j$ and content $k$. Similarly, the initial embedding of message from the perspective of content $k$ is presented as $\textbf{{Inf}}_{kj}=f(\textbf{\emph{v}}_{i_k},\textbf{\emph{v}}_{u_j},\textbf{\emph{e}}_{jk})$. For node ${u_j}$ at time $t$, we denote the set of its request targets as: $\mathcal{N}(\textbf{\emph{v}}_{u_j};t) = \{\textbf{\emph{v}}_{i_0}, \textbf{\emph{v}}_{i_1}, ..., \textbf{\emph{v}}_{i_M}\}$. Subsequently, a DGNN model can be employed to generate the representations containing structural and dynamical patterns for further computation. Moreover, since the interactions we cope with are instantaneous, so we ignore the lasting time of an interaction and only focus on the occurrence of the request.

\subsection{Temporal Graph Network Model for Popularity Prediction}
We discuss how to predict the popularity after formulating the interaction between users and the requested content as a dynamic bipartite graph. Since the graph is not static, we refer to the temporal graph network \cite{rossi2020temporal} for discovering its underlying temporal relationship concurrently. The components of this model are defined as follows:

\begin{itemize}
\item The time-coding module, which is proposed in \cite{Xu2020Inductive}, is used to encode the timestamp of a request's built-up time. However, the interval between the latest request and the target one is much more meaningful than the absolute time points, so we use the difference $\Delta_t$ instead. To some extent, it achieves a preliminary extraction of temporal information. Consistent with TGAT, the function is defined as:
\begin{equation}\label{eq1}
\Phi _{d_T}(\Delta_t) = \sqrt{\frac{1}{d_T}}[\cos(\omega _1\Delta_t),\cos(\omega _2\Delta_t), ..., \cos(\omega _d\Delta_t)]^T
\end{equation}
where $\Phi _{d_T}(\Delta_t) \in \mathbb{R}^{d_T}$, $\omega_1, \omega_2, ...,\omega_d$ are the parameters to be trained and $d_T$ is the dimension number of the time embedding we want.

\item The time-concatenating module combines the initial message embedding $\textbf{{Inf}}_{jk}$ between users and content with the encoded time feature as the intact representation of an interaction, and the set of user $j$'s all interactions is described as:
\begin{equation}\label{eq2}
\textbf{{Msg}}_j(t) = [\textbf{{Inf}}_{j0}||\Phi _{d_T}(0),...  ,\textbf{{Inf}}_{jk}||\Phi _{d_T}(\Delta{t_N})]^T
\end{equation}
where the operator $||$ denotes a concatenation operation and $\textbf{{Msg}}_j(t)\in \mathbb{R}^{N\times(d+d_T)}$ is the final input feature to the DGNN model. 

\item Inspired by TGN \cite{rossi2020temporal}, a memory embedding module is also adopted to reinforce the dynamics refinement, i.e., the extraction of short-term and long-term interests of users. It aggregates the historical interactions stored in the message buffer to obtain an embedding with richer temporal information about short-term preference. Subsequently, it updates former memory with the embedding to inherit and renew the long-term one. Considering that the aggregation methods in TGN are elementary for extracting short-term temporal characteristic, we further propose to adopt an AoI-based attention mechanism for utilizing raw request information and we will talk about it later. 

\item Finally, an embedding module is adopted to extract the structural pattern within the graph and merge the time feature that we want to predict for further computation. The embedding module outputs the final representations, which contain both temporal and structural characteristics for all vertexes in the dynamic bipartite graph. Specifically, the final representation of user $j$ and content $k$ are denoted as $\textbf{{E}}^u_{j}$ and $\textbf{{E}}^i_{k}$. Based on these embedding representations, we can predict the preference of user $j$ for content $k$ as 
\begin{equation}
\label{corr}
    \tilde{p}_j^k(\delta_p) = {F}(\textbf{{E}}^u_{j}, \textbf{{E}}^i_{k})
\end{equation}
where a multi-layer perceptron (MLP) is a good choice to achieve the function ${F}(\cdot)$. Besides, we extract the same number of negative samples to speed up the training. Finally, we use a binary cross entropy (BCE) loss function for optimizing the whole neural network:
\begin{equation}
    \mathcal{L} = \frac{1}{n}\sum_{\gamma}-{(y_{\gamma}\log(\tilde{p}_{\gamma}) + (1 - y_{\gamma})\log(1 - \tilde{p}_{\gamma}))}
\end{equation}
where $y_{\gamma}$ is the label of the ${\gamma}$-th samples, $y_{\gamma}=1$ if the ${\gamma}$-th is a positive sample and $y_{\gamma}=0$ for negative samples. $\tilde{p}_{\gamma}=\tilde{p}_j^k$ is prediction result of the ${\gamma}$-th samples calculated by Eq. \eqref{corr}.
\end{itemize}

After estimating the user preference $\tilde{p}_j^k, \forall j,k$, we can derive the request behaviors $\tilde{A}^k(\delta_p, \tilde{p}_j^k)$ and the caching policy by Algorithm \ref{alg:algorithm1} and Algorithm \ref{alg:algorithm2}.

\section{AoI-based Temporal Graph Neural Network}\label{sec4}
Although the future requests of a user are diverse, comparatively accurate predictions can still be speculated by aggregating and analyzing existing history information. To some extent, aggregating history information can extract the short-term interests of users. But as described above, the ways of aggregation in TGN are still far from perfection: 1) the method of always keeping the latest request may not have much negative impact on the trained nodes, but it may lead to some errors when a rookie node joins into the graph due to the lack of historical references. 2) the method of averaging ignores the fact that requests in different time have distinct influence on future behavior, and some are obsolete that may bring adverse effects. Thus we propose an AoI-based attention mechanism to ameliorate the model. 

\begin{figure*}[tbp]
\centering  
\includegraphics[scale = 0.625]{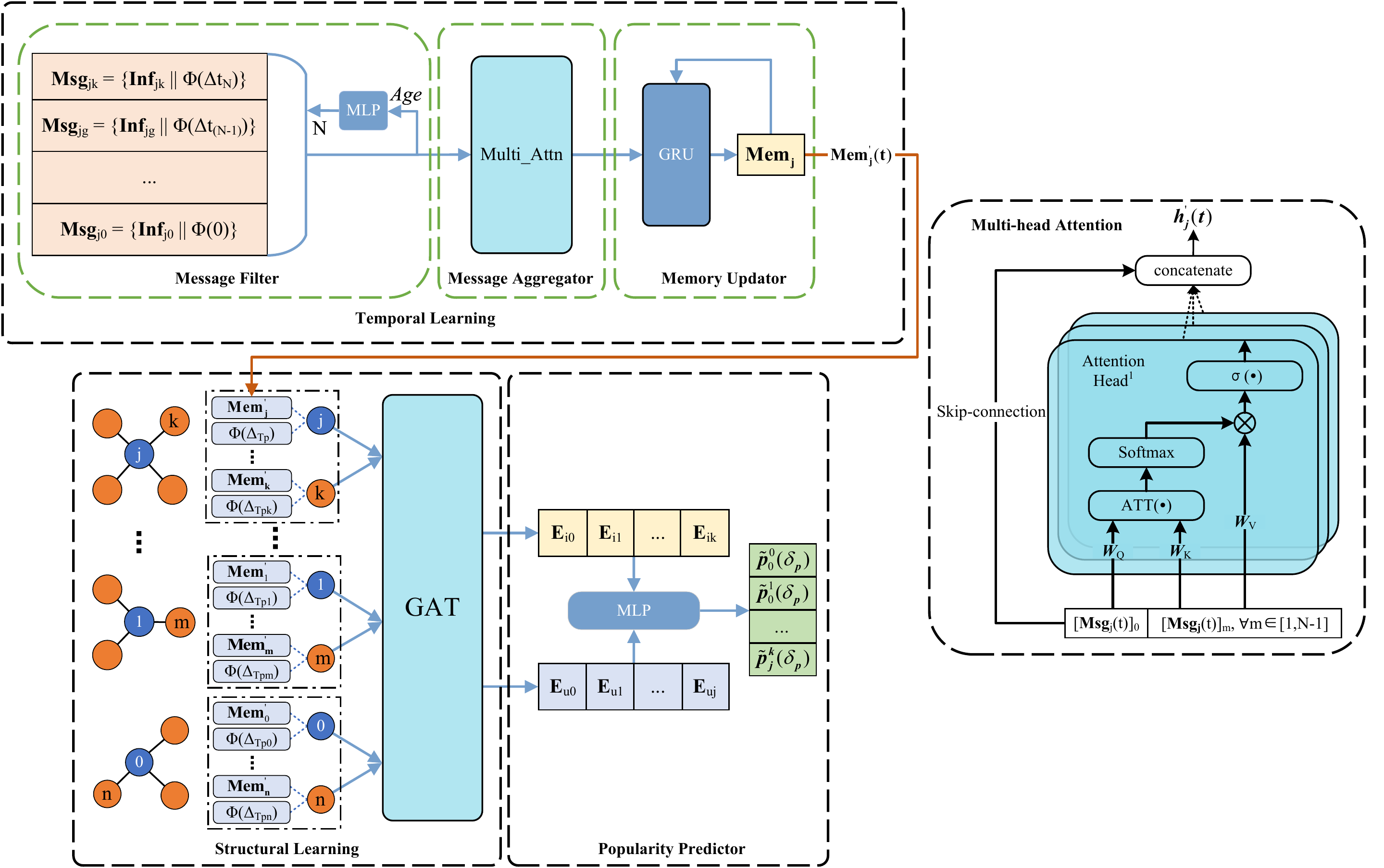}
\caption{The illustration of AoI-based temporal attention graph neural network and the multi-head attention mechanism.} 
\label{fig:tgn_re1}
\end{figure*} 

\subsection{Attention Mechanism for Temporal Pattern Extraction}\label{4.1}
As demonstrated in TGN, the more recent and repeated actions usually have greater impacts on the future interests prediction, we can calculate the degrees of correlation between previous interactions and the latest one with a self-attention mechanism \cite{vaswani2017attention}, which are used as the indications of influence later. Hence, we name this model TGN-A for short. Notably, the weighted summation of all the chosen messages is the aggregated feature to update the memory, as shown in Fig. \ref{fig:tgn_re1}. The aggregated message of user $j$ is calculated as follows:
\begin{equation}\label{eqa3}
\begin{aligned}
\textbf{\emph{h}}_j(t)&={ATT}(\textbf{{Msg}}_j(t),[\textbf{{Msg}}_j(t)]_0 ,\textbf{\emph{V}})\\
&=\sigma(\sum_{m\in N} \alpha_{jm}\textbf{\emph{V}}_{jm}) \in\mathbb{R}^{d_h}
\end{aligned}
\end{equation}
where $\textbf{\emph{V}}_{jn}= [\textbf{{Msg}}_j(t)]_n \textbf{\emph{W}}_{V}$, $\textbf{{W}}_{V}\in\mathbb{R}^{(d+d_T)\times d_h}$ is the weight matrix that needs to be trained. $\alpha_{jm}$ is the attention coefficient, indicating the importance between the $m$-th history message and the most recent one, and $\alpha_{jm}$ can be specifically presented by:
\begin{equation}\label{eq3.1}
\begin{aligned}
\alpha_{jm}&=\rm{softmax}([\textbf{{Msg}}_j(t)]_m, [\textbf{{Msg}}_j(t)]_0)\\
&=\frac{\exp\left(([\textbf{{Msg}}_j(t)]_0 \textbf{\emph{W}}_{Q})( [\textbf{{Msg}}_j(t)]_m \textbf{\emph{W}}_{K})\right)} {\sum_{n = 1}^{N} \exp\left(([\textbf{{Msg}}_j(t)]_0 \textbf{\emph{W}}_{Q}) ([\textbf{{Msg}}_j(t)]_n \textbf{\emph{W}}_{K})\right)} 
\end{aligned}
\end{equation}
where the $\textbf{{W}}_{Q}$, $\textbf{\emph{W}}_{K} \in\mathbb{R}^{(d+d_T)\times d_h}$ are the weights allocated to mix the request features with encoded time for producing the integrated features from different perspectives. Besides, $\rm{softmax}(\cdot)$ is a function that aims at normalizing the obtained coefficients and emphasizing the weights of important elements. $[\textbf{{Msg}}_j(t)]_m$ or $[\textbf{{Msg}}_j(t)]_n$ for $m, n = 1,..., N-1$ are the $m$-th or $n$-th message in $\textbf{{Msg}}_j(t)$ and $[\textbf{{Msg}}_j(t)]_0$ is the latest one. 

It is worth noting that, instead of considering the effect of all historical requests, inspired by GraphSAGE \cite{hamilton2017inductive} and TGN \cite{rossi2020temporal}, we only sample the most recent $N$ historical messages for aggregating. As for those only with $M$ interactions, where $M<N$, we pad their history behavior set $\textbf{{Msg}}_j(t)$ with mask operation, as done in Transformer \cite{vaswani2017attention}.

Besides, in view of the excessive smoothing effect incurred by increasing the number of network layers for information transmission and aggregation of nodes, we introduce the ``skip-connection'' derived from ResNet \cite{he2016deep} to combine the obtained representation with the base message $[\textbf{{Msg}}_j(t)]_0$ by a feed-forward network ${FN}(\cdot)$. We believe it will also improve the overall performance of the model by capturing non-linear interactions between the features:
\begin{equation}\label{eq4}
\begin{aligned}
\overline{\textbf{{Msg}}_j(t)}&=\textbf{\emph{h}}'_j(t) = {FN}\left(\textbf{\emph{h}}_j(t) ||  \textbf{\emph{R}}_{j0}\right) \\
&= {\rm ReLU}\left((\textbf{\emph{h}}_j(t) || \textbf{\emph{R}}_{j0})\textbf{\emph{W}}_0 + \textbf{\emph{b}}_0\right)\textbf{\emph{W}}_1+\textbf{\emph{b}}_1
\end{aligned}
\end{equation}
where $\textbf{\emph{h}}'_j(t) \in \mathbb{R}^{d}$ is the final aggregated vector representing the time-aware embedding at time $t$, and can also be denoted as $\overline{\textbf{{Msg}}_j(t)}$. And $\textbf{\emph{W}}_0 \in \mathbb{R}^{(d_h+d)\times d_0}$, $\textbf{\emph{W}}_1 \in \mathbb{R}^{d_0 \times d}$, $\textbf{\emph{b}}_0 \in \mathbb{R}^{d_0}$, $\textbf{\emph{b}}_1 \in \mathbb{R}^{d}$ are weight parameters to be trained in the neural network.

Empirically, \cite{vaswani2017attention} suggests that a multi-head attention may avoid the instability of the training process and promotes the performance of self-attention. We also extend the adopted attention to the multi-head setting, as presented in the right side of Fig. \ref{fig:tgn_re1}. We conduct an attention mechanism with $L$ heads and record each head's output for user $j$ and content $k$ with Eq. \eqref{eqa3} and Eq. \eqref{eq3.1} as: $\textbf{\emph{h}}^{(l)}_j(t)={ATT}^{(l)}\left([\textbf{{Msg}}_j(t)]^{(l)},[\textbf{{Msg}}_j(t)]_0^{(l)},\textbf{\emph{V}}^{(l)}\right), l=1,2,...,L$, and the weights values in different heads are various. Actually, it is a process that several single-head attentions carried out independently in parallel to gain deeper insights from different observation angles. Consequently, we modify Eq. \eqref{eq4} as:
\begin{equation}
\label{eq_mha}
\overline{\textbf{{Msg}}_j(t)}=\textbf{\emph{h}}'_j(t) = {FN}\left(\textbf{\emph{h}}^{(1)}_j(t) || \textbf{\emph{h}}^{(2)}_j(t)||...\textbf{\emph{h}}^{(L)}_j(t)|| \textbf{\emph{R}}_{j0}\right)
\end{equation}
In our experiment, we find the final attention architecture with three heads $(L = 3)$ will lead to a satisfactory result.

After extracting short-term interests with multi-head attention, we choose the Gated Recurrent Unit (GRU) \cite{cho2014learning} to finish the memory updating. The vector of new memory for user $j$, $\textbf{{Mem}}'_j$, derived from the memory updater where update the former memory $\textbf{{Mem}}_j$ with the aggregated message $\textbf{\emph{h}}_j'(t)$, and it contains all the history that we have chosen in aggregating module. To some extent, it also captures the long-term interest within the request data:
\begin{equation}
\label{gru}
\begin{aligned}
\textbf{{Mem}}_j' &= {GRU}\left(\textbf{{Mem}}_j, \overline{\textbf{{Msg}}_j(t)}\right)\\
&={GRU}\left(\textbf{{Mem}}_j, \textbf{\emph{h}}_j'(t)\right)\\
&=\textbf{\emph{Z}}_t\cdot\textbf{\emph{H}}_t +\left(1-\textbf{\emph{Z}}_t\right)\cdot \textbf{{Mem}}_j
\end{aligned}
\end{equation}
where $\textbf{\emph{Z}}_t$ is the update gate which decides the proportion of information that need to be inherited from the last hidden state, and $\textbf{\emph{H}}_t$ is the hidden state which is produced by ignoring some previous state $\textbf{{Mem}}_j$ and resetting the current input $\overline{\textbf{{Msg}}_j(t)}$ or $\textbf{\emph{h}}_j'(t)$ by a reset gate $\emph{F}_t$. They are conducted as follow:
\begin{equation}
\begin{aligned}
\textbf{\emph{Z}}_t=&\sigma\left(\textbf{\emph{h}}_j'(t)\textbf{\emph{W}}_{hZ}+\textbf{{Mem}}_j\textbf{\emph{W}}_{MZ}+\textbf{\emph{b}}_Z\right)\\
\textbf{\emph{F}}_t=&\sigma\left(\textbf{\emph{h}}_j'(t)\textbf{\emph{W}}_{hF}+\textbf{{Mem}}_j\textbf{\emph{W}}_{MF}+\textbf{\emph{b}}_F\right)\\
\textbf{\emph{H}}_t=\rm{tanh}&\left(\textbf{\emph{h}}_j'(t)\textbf{\emph{W}}_{hH}+\left(\textbf{\emph{F}}_t\cdot\textbf{{Mem}}_j\right)\textbf{\emph{W}}_{MH}+\textbf{\emph{b}}_H\right)
\end{aligned}
\end{equation}
where $\textbf{\emph{W}}_{hZ}$, $\textbf{\emph{W}}_{hF}$, $\textbf{\emph{W}}_{hH}$, $\textbf{\emph{W}}_{MZ}$, $\textbf{\emph{W}}_{MF}$, $\textbf{\emph{W}}_{MH}$ are the weights of the recurrent neural networks, $\textbf{\emph{b}}_Z$, $\textbf{\emph{b}}_F$, $\textbf{\emph{b}}_H$ are their bias values. The activation function $\sigma(\cdot)$ is used to limit the result within 0 and 1. Due to the process of forgetting and updating, we obtain a new feature that contains user's long-term interests as well as the short-term ones. Similarly, a Long Short-Term Memory (LSTM) or other RNN architectures have been proved that they also have a similar gain in \cite{Xu2020Inductive}.

\subsection{AoI-based Attention Mechanism}\label{4.2}

Faced with massive historical information, one of the essential issues lies on deciding the amount of information that deserves to be aggregated. If the chosen requests occurred too long before, it may have an negative impact on the future, let alone using all precious requests, leading to an unnecessary increasement in computing cost. Even we have restricted the size of aggregated neighborhoods, the selected information may still be non-positive to our prediction. To optimize the result, we introduce the concept of AoI, which is a commonly-adopted metric of information freshness in WSN. Inspired by the application of filtering fresh data in WSN, we tend to pass AoI to the TGN with attention (TGN-A) model for excluding the information with an age that is too stale to be detrimental to the final prediction. For simplicity of representation, we denote the TGN with AoI-based attention for temporal learning as ATAGNN.

As introduced in \cite{kosta2017age}, the age of information is usually defined as:
\begin{equation}
\label{aoi}
   \Delta_{jk}(t,n)= t-{B}_{jk}(n)
\end{equation}
where ${B}_{jk}(n)$ represents the birth time of the $n$th request between user $j$ and content $k$, and $t$ denotes the current time or the newest time point of the messages. Hence, we can easily figure out an age for each request. In fact, since our training is based on batches, we regard the latest moment in a batch as the independent variable. But unlike the serration distribution in traditional AoI computation, we regard the requests between two nodes at various time points as different interactions to avoid the influence reductions of repeated actions in the follow-up calculation.

We concatenate all the age data as an $N$-length vector $a_j$, and then pass it to a two-layer MLP module to finish the adaptive selecting task of the worthy information for each vertex. 
\begin{equation}\label{eqther}
\begin{aligned}
\textbf{\emph{h}}_{th} &= {MLP}(\emph{a}_j)=\rm{ReLu}(\emph{a}_j\textbf{\emph{W}}^1 + \textbf{\emph{b}}^1)\\
{\rm thre}_t &= {MLP}(\textbf{\emph{h}}_{th})=\rm{ReLu}(\textbf{\emph{h}}_{th}\textbf{\emph{W}}^2 + \textbf{\emph{b}}^2)
\end{aligned}
\end{equation}
where $a_j \in \mathbb{R}^{N}$, $\textbf{\emph{h}}_{th} \in \mathbb{R}^{d_{th}}$ and ${\rm thre}_t \in \mathbb{R}$. Besides, $\textbf{\emph{W}}^1\in\mathbb{R}^{N\times d_{th}}$ and $\textbf{\emph{b}}^1\in\mathbb{R}^{d_{th}}$, $\textbf{\emph{W}}^2\in\mathbb{R}^{d_th}$ and $\textbf{\emph{b}}^2\in\mathbb{R}^{1}$ are the training parameters of the 2-layer perceptron. We believe this module can simply fitting the general behavior based on the age of requests and offer a threshold for determining whether the information deserves to be taken into consideration or not. And we discuss the details of the feasibility in Appendix \ref{APP:a}.

However, the fitting ability of an MLP is relatively preliminary, which will bring certain deviations inevitably. In order to decrease the potential impact of such deviations, we also adopt a ``soft method” to promote the performance:
\begin{equation}
\label{eq5}
{t'^n}_{jk} = t^n_{jk} \cdot \sigma \left(100\ast(\Delta_{jk}(t,n)-\rm{thre}_t)\right)
\end{equation}
where $t^n_{jk}$ is the raw timestamps of the $n$th interaction between vertexes $j$ and $k$ in the dynamic graph $\mathcal{G}$, and ${t^n}'_{jk}$ is the one we actually use in subsequent calculation, ${\rm thre}_t$ is the threshold we obtain from the MLP. After calculating Eq. \eqref{eq5}, we mask all the requests with an age that is lower than the threshold time, while redistributing a little greater timestamp to those that are close to the threshold. Then we execute the multi-head attention module with the new set of valuable messages, as mentioned before. 

\subsection{Future and Structural Patterns Embedding}
As the final module in our model, we choose a GAT to accomplish the structure's deeper extraction and the generation of unique embedding representations for the participants of an interaction we want to predict. In order to map the future information, rather than the commonly adopted $\rm{LeakyReLu}(\cdot)$ in GAT \cite{velivckovic2017graph}, we promote the learning performance by adding the encoded target timestamps into the memory vectors and adopting a linear transformation with a dot-product:
\begin{equation}\label{newgat}
\alpha^g_{jk}= \frac{\exp((\tilde{\textbf{\emph{h}}}_k\textbf{\emph{W}}^g_Q)^T(\tilde{\textbf{\emph{h}}}_j\textbf{\emph{W}}^g_K)} {\sum_{m \in \mathcal{N}(v_j;t)} \exp((\tilde{\textbf{\emph{h}}}_m\textbf{\emph{W}}^g_Q)^T(\tilde{\textbf{\emph{h}}}_j\textbf{\emph{W}}^g_K)} 
\end{equation}
where $\textbf{\emph{W}}^g_K$ and $\textbf{\emph{W}}^g_Q$ are the weight parameters that we employ to capture the relationship between time encoding and the output of temporal leaning $\textbf{{Mem}}'_j$. Besides, the $\tilde{\textbf{\emph{h}}}_j = [\textbf{{Mem}}'_j || \Phi _{d_T}(\Delta_{T_p})]$, and $\Delta_{T_p} = T_p - T_l$ is the difference between the target timestamp $T_p$ to predict and the most recent timestamp $T_l$ of a request in history from user $j$, where $T_p \geq T_l$. As we presented before, the more attention heads, the better structural representations will be extracted. Therefore, we also encapsulate a multi-head attention mechanism into this module.

We desire to generate reliable representations for the potential users and content by the above model, and calculate their correlation degree as a reference of judging users' coming behavior. Finally, we summarize the above algorithm in Algorithm \ref{alg:algorithm1}. Meanwhile, we provide a comprehensive graphical illustration of our AoI-based temporal attention module for user $j$'s temporal learning in the upper left part of Fig. \ref{fig:tgn_re1}.

\begin{algorithm}[t]
\renewcommand{\algorithmicrequire}{\textbf{Input:}}
\renewcommand{\algorithmicensure}{\textbf{Output:}}
\caption{The preference prediction algorithm with AoI-based temporal attention GNN}
\label{alg:algorithm1}
\begin{algorithmic}[1]
\REQUIRE Dynamic request dataset;
\ENSURE The presentations of users $\textbf{\emph{E}}^u_{j}$ and content $\textbf{\emph{E}}^i_{k}$. And predicate the preference between $u_j$ and $i_k$ $p$\
\STATE {Initialize the parameters for the whole network;}\
\STATE {Initialize the memory buffer with zeros and message buffer.} \
\STATE {Restore the graph information ($\textbf{{Inf}} \gets \rm{all\ messages}$) and divide it into several mini batches;}\
\FOR {each $\rm{batch}(\textbf{\emph{v}}_{u_j}, \textbf{\emph{v}}_{i_k}, \textbf{\emph{e}}_{ui}, t) \in training\  dataset$} 
\STATE {$\dot{n}\gets \rm{Sample\ negatives}$;} 
\STATE {Calculate the age $\Delta_{jk}(t,n)$ and the threshold ${\rm thre}_t$;} 
\STATE {Filter and concatenate the valuable messages $\textbf{{Msg}}(t)$;}
\STATE {Aggregate with multi-head attention mechanism in Eq. \eqref{eqa3}, \eqref{eq_mha} and obtain $\overline{\textbf{{Msg}}_j(t)}$;}
\STATE {Update features $\textbf{{Mem}}_j$ in memory buffer with GRU in Eq. \eqref{gru};}
\STATE {Encode the time difference $\Delta_{T_p}$ with Eq. \eqref{eq1} for all nodes;}
\STATE {Concatenate the encrypted time feature with $\textbf{{Mem}}_j$, and obtained the new feature $\tilde{\textbf{\emph{h}}}_j$ as the input;}
\STATE {Obtain $\textbf{\emph{E}}^u_{j}(T_p)$ and $\textbf{\emph{E}}^i_{k}(T_p)$ through the modified $GAT$;}
\STATE {Predict the correlation degree between users and content with Eq. \eqref{corr};}
\STATE {Optimize this network with $\rm{BCELoss}(\cdot)$;}
\ENDFOR
\end{algorithmic}
\end{algorithm}

\begin{table*}[htbp]
\renewcommand\arraystretch{1.2}
\centering
\caption{The results of transductive learning and inductive learning tasks for predicting future links of nodes. TGN-L, TGN-M and TGN-A are the TGN model with keeping latest message, averaging all messages and using an attention mechanism for a fixed number of messages, respectively. 5n, 10n and 15n mean the number of neighbors that the attention mechanism aggregates. The best results in each column are highlighted in \textbf{bold} font and the best results in baseline are highlighted in \underline{underline}.} 
\label{table_1}
\resizebox{\textwidth}{!}{
\begin{tabular}{cc|cccc|cccc}
\hline
\multicolumn{2}{c|}{Dataset} & \multicolumn{4}{c|}{Wikipedia} & \multicolumn{4}{c}{MOOC} \\
\multicolumn{2}{c|}{Metric} & Old AUC & Old AP & New AUC & New AP & Old AUC & Old AP & New AUC & New AP \\ \hline
\multicolumn{1}{c|}{\multirow{5}{*}{Baseline}} & RNN & 76.476 & 77.100 & - & - & 70.394 & 70.461 & - & - \\
\multicolumn{1}{c|}{} & DyRep & 93.823 & 94.313 & 91.795 & 92.668 & 87.498 & 83.591 & 87.201 & 83.493 \\
\multicolumn{1}{c|}{} & TGAT & 95.391 & 95.710 & 93.241 & 93.810 & 74.413 & 69.932 & 73.211 & 69.282 \\
\multicolumn{1}{c|}{} & TGN-L & {\ul 98.406} & {\ul 98.470} & 97.700 & 97.805 & {\ul 92.026} & {\ul 89.855} & {\ul 92.447} & {\ul 90.494} \\
\multicolumn{1}{c|}{} & TGN-M & 98.342 & 98.426 & {\ul 97.741} & {\ul 97.864} & 90.951 & 88.423 & 92.271 & 90.319 \\ \hline
\multicolumn{1}{c|}{\multirow{3}{*}{TGN-A}} & 5n & 98.481 & 98.559 & 97.806 & 97.911 & 93.091 & 90.944 & 92.994 & 91.014 \\
\multicolumn{1}{c|}{} & 10n & 98.509 & 98.600 & 97.923 & 98.044 & 93.088 & 91.078 & 92.737 & 90.783 \\
\multicolumn{1}{c|}{} & 15n & 98.508 & 98.589 & 97.909 & 98.029 & 93.384 & 91.280 & 93.103 & 91.090 \\ \hline
\multicolumn{1}{c|}{\multirow{3}{*}{\begin{tabular}[c]{@{}c@{}}ATAGNN\\ without\\ Eq. \eqref{eq5}\end{tabular}}} & 5n & 98.514 & 98.591 & 97.925 & 98.035 & 93.183 & 91.150 & 93.109 & 91.111 \\
\multicolumn{1}{c|}{} & 10n & 98.530 & 98.605 & 97.883 & 98.013 & 93.554 & 91.603 & 93.310 & 91.421 \\
\multicolumn{1}{c|}{} & 15n & 98.510 & 98.594 & 97.955 & 98.074 & 93.541 & 91.516 & \textbf{93.424} & \textbf{91.478} \\ \hline
\multicolumn{1}{c|}{\multirow{3}{*}{\begin{tabular}[c]{@{}c@{}}ATAGNN\\ with\\ Eq. \eqref{eq5}\end{tabular}}} & 5n & \textbf{98.544} & 98.625 & 97.915 & 98.026 & 93.362 & 91.396 & 93.302 & 91.394 \\
\multicolumn{1}{c|}{} & 10n & 98.526 & 98.610 & 97.941 & \textbf{98.066} & \textbf{93.577} & \textbf{91.635} & 93.330 & 91.376 \\
\multicolumn{1}{c|}{} & 15n & 98.539 & \textbf{98.632} & \textbf{97.957} & 98.061 & 93.568 & 91.572 & 93.269 & 91.300 \\ \hline
\end{tabular}
}
\end{table*}

\section{Simulation Results and Numerical Analysis}\label{sec5}
In this section, we evaluate our performance of the models we mentioned above based on two real-world datasets: Wikipedia and MOOC. We also make a comparison between our models and four state-of-the-art methods designed for representation learning in temporal networks, including RNN \cite{DBLP:journals/corr/HidasiKBT15}, DyRep \cite{trivedi2019dyrep}, TGAT \cite{Xu2020Inductive} and TGN \cite{rossi2020temporal}. Besides, an experiment of cache hit rate in ICN between our prediction-based caching approach and the traditional policies (e.g., LRU and LFU) is also conducted.

\subsection{Dataset Description}\label{5A}
\textbf{Wikipedia Dataset:}
It is a public dataset that records the Wikipedia pages edited by users on Wikipedia within 30 days. The number of entries and users involved is 9227. There are more than 15000 interactions, which represent the number of edges in the bipartite graph. Besides, their interactions are time-stamped. Besides, we perform a 70\%-15\%-15\% chronological split for training, validation and testing.

\textbf{MOOC Dataset:}
It is also a public dataset that records the history of actions done by students on a MOOC online course within one month, e.g., watching a video, submitting an answer. We select 5763 users and 56 contents as the nodes of the dynamic graph, which also consists of 175,856 time-stamped interactions. Due to the relatively large amount of interactions, we perform a 60\%-20\%-20\% chronological split for training, validation and testing.

\begin{figure}[tbp] 

\centering  
\subfloat[Transductive Task]{
\centering  
\includegraphics[width=2.7in]{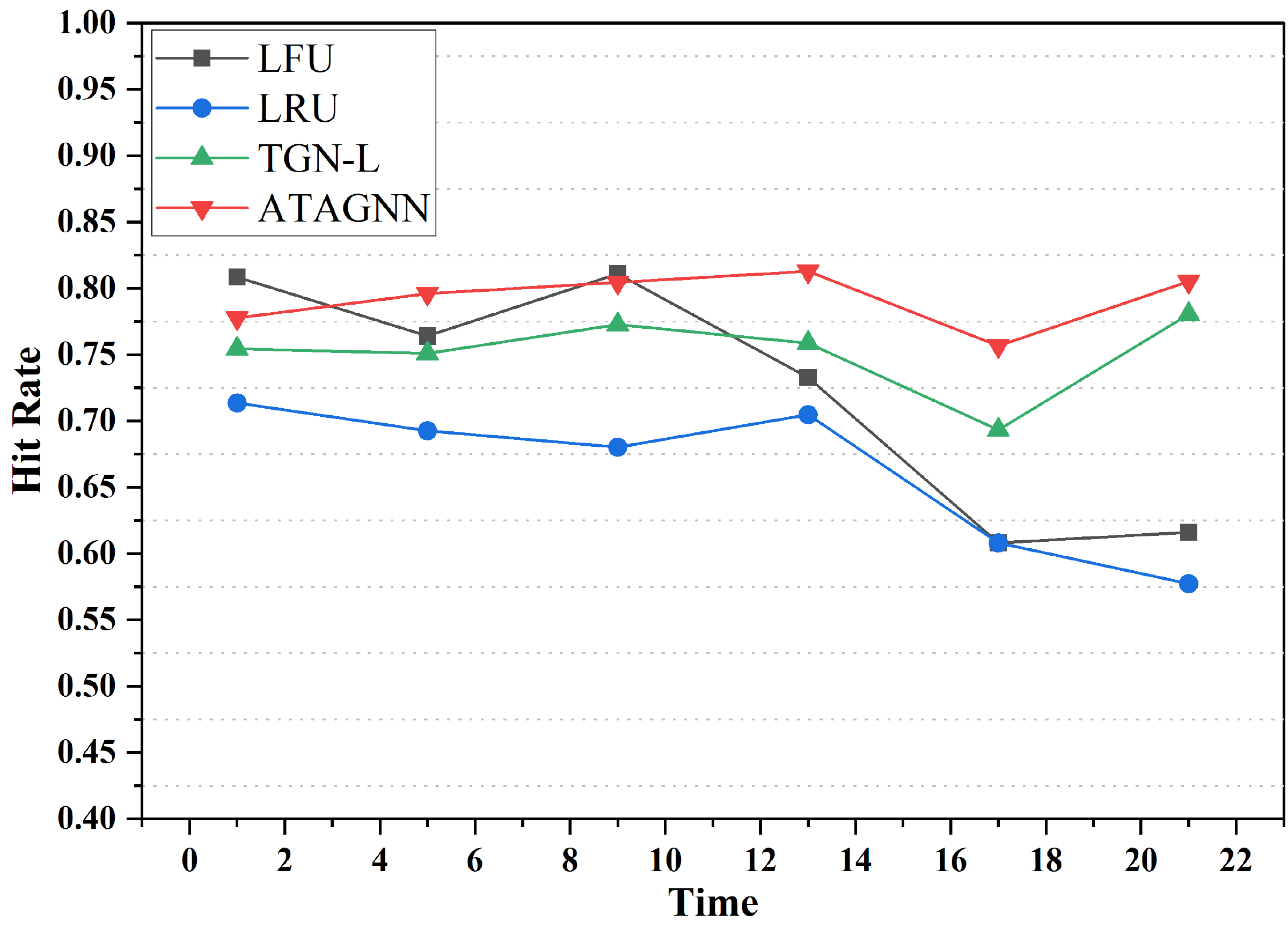}
}

\subfloat[Inductive Task]{
\centering  
\includegraphics[width=2.7in]{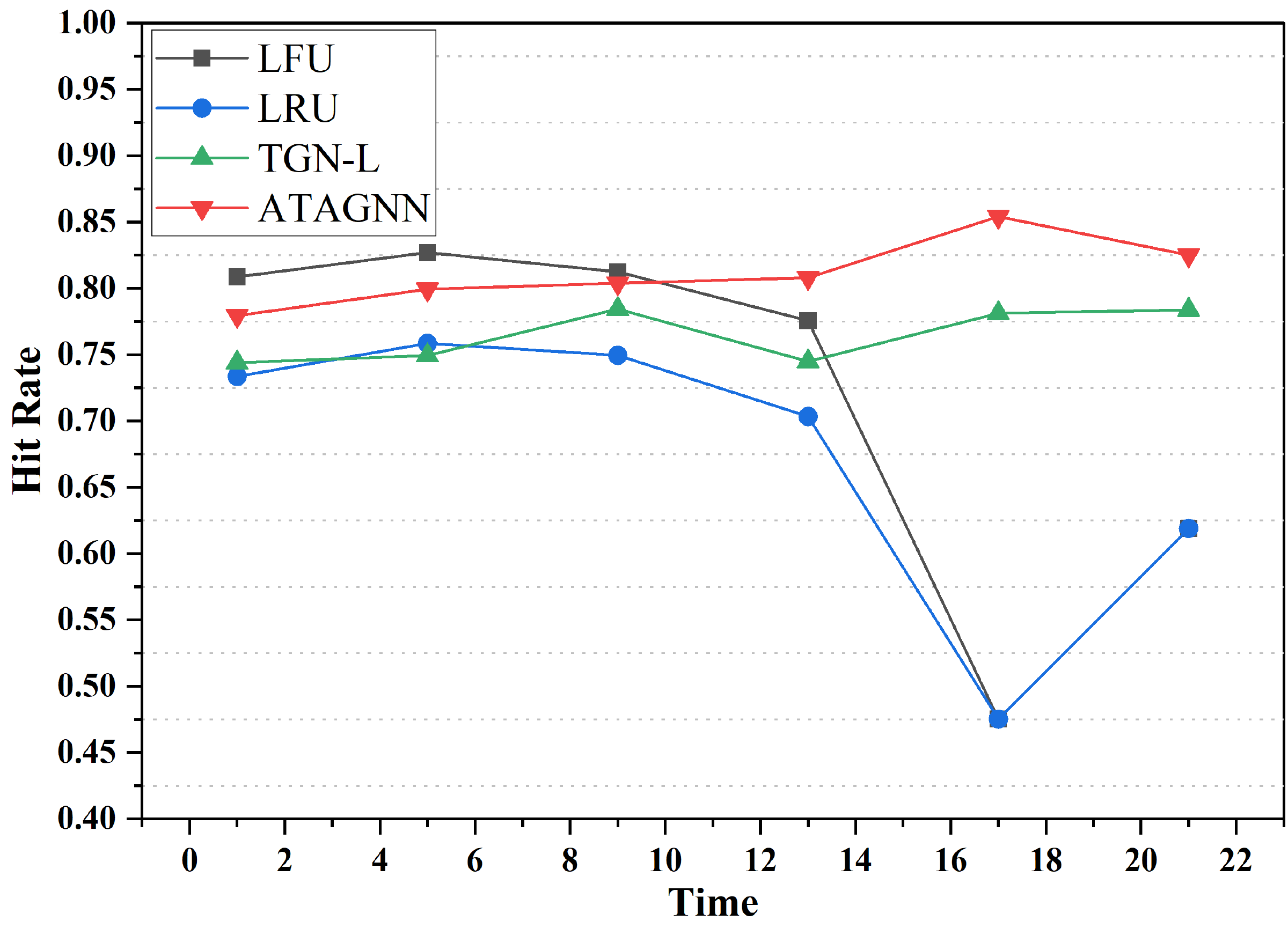}
}

\caption{24-hour hit rate performance of MOOC datasets with different algorithms in Transductive and inductive tasks.}  
\label{fig_re2}
\end{figure}

\subsection{Experimental Setup}\label{5B}

\textbf{Evaluation Tasks and Training Configuration:}
One of the superiority of our model is that it can be easily generalized to the new node. Thus, we verify our model's performance in two types of tasks, i.e., transductive task and inductive task. In the transductive task, we evaluate model's ability of predicting the temporal links for those nodes that have been observed in the training phase. In contrast, our target in inductive task is to inspect the model's talent in representing the nodes that have never been trained. Equally, all the chosen baselines based on GNN also carry out both tasks, while the RNN model is only able to accomplish the transductive one. We also set a max number of aggregating for TGN-A and our ATAGNN model. To demonstrate the influence of AoI, we have a comparison between our model and TGN-A with a fixed age threshold (i.e., 60s, 600s, 3600s.) and the max number of neighbors that we want to aggregate is 5.

\begin{table}[t]
\renewcommand\arraystretch{1.2}
\centering
\caption{The result of TGN-A with fixed age threshold (i.e., 60s, 600s, 3600s), the origin model without considering the age and the ATAGNN model with MLP for adaptively choosing threshold. The best results in each column are highlighted in \textbf{bold} font}
\label{tab:my-table}
\scalebox{0.9}{%
\begin{tabular}{cc|cccc}
\hline
\multicolumn{2}{c|}{Datasets} & \multicolumn{4}{c}{MOOC} \\
\multicolumn{2}{c|}{Metric} & Old AUC & Old AP & New AUC & New AP \\ \hline
\multicolumn{1}{c|}{\multirow{3}{*}{\begin{tabular}[c]{@{}c@{}}TGN-A\\ with\\ Age\end{tabular}}} & 60 & 92.363 & 90.191 & 92.234 & 90.004 \\
\multicolumn{1}{c|}{} & 600 & 92.695 & 90.612 & 92.686 & 90.696 \\
\multicolumn{1}{c|}{} & 3600 & 93.088 & 91.026 & 92.792 & 90.641 \\ \hline
\multicolumn{2}{c|}{TGN-A without Age} & 93.091 & 90.944 & 92.994 & 91.014 \\ \hline
\multicolumn{2}{c|}{ATAGNN} & \textbf{93.362} & \textbf{91.396} & \textbf{93.302} & \textbf{91.394} \\ \hline
\end{tabular}%
}
\end{table}

\begin{algorithm}[t]
\caption{The caching algorithm with neural network}
\label{alg:algorithm2}
\begin{algorithmic}[1]
\STATE {Initialize and load the whole neural network and the memory buffer;}\
\FOR {each $\rm{hour} \in [1, 24]$}
\IF{$\rm{hour} \% T_u == 0$}
    \FOR {$T_p = 0$; $T_p < 3600$; $T_p + \delta_p$}
    \STATE {Load the trained memory;}
    \STATE {Obtain $\textbf{\emph{E}}^u_{j}(\delta_p)$ and $\textbf{\emph{E}}^i_{k}(\delta_p)$ with the neural network;}
    \STATE {Obtain the correlation degrees $\tilde{p}_j^k(\delta_p)$ between users and content;}
    \STATE {Count the possible activity with Eq. \eqref{eq0};}
    \STATE {Clear the memory buffer;}
    \ENDFOR
\ELSE
    \STATE {Update the trained memory with real data;}
\ENDIF
\STATE {Add up all the $\tilde{A}^k(\delta_p)$ and sort them with a list;}
\STATE {Cache the top-$C$ content as according to the cache space. }
\ENDFOR
\end{algorithmic}
\end{algorithm}

Before the training, we sample an equal amount of negative interactions to the positive node pairs, regarding the prediction as a binary classifications problem, thus a BCELoss function is chosen. We adopt an Adam optimizer with a learning rate of 0.0001, a batch size of 200 for all examinations. Finally, we adopt \textit{Area Under the ROC Curve} (AUC) and \textit{Average Precision} (AP) \cite{skardinga2021foundations} as the metrics to indicate the performance.

\begin{figure*}[!t]
\subfloat[Transductive Task in Wikipedia]{
\centering
\begin{minipage}{0.4\textwidth}
\includegraphics[scale = 0.32]{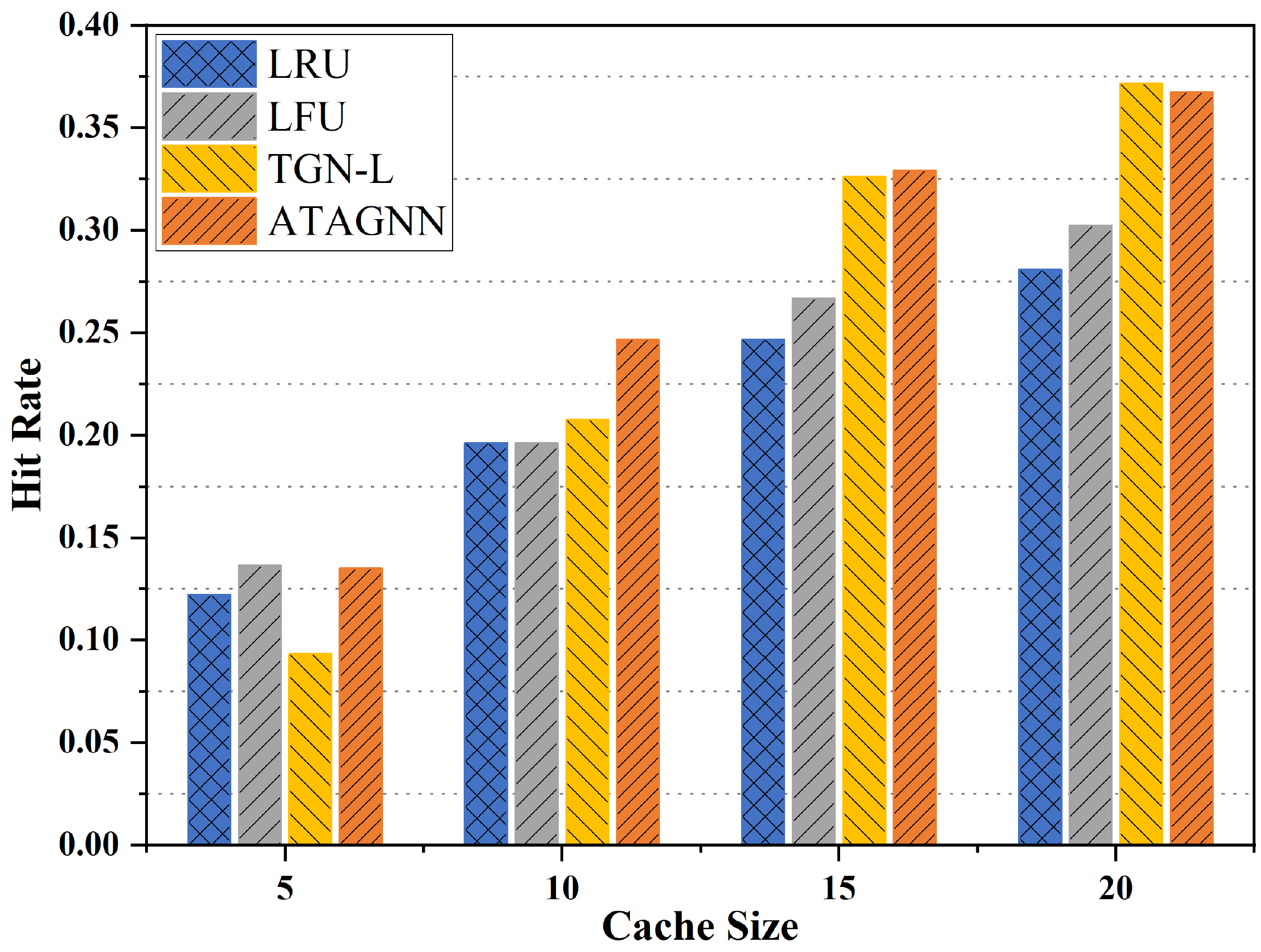}
\end{minipage}
}
\hfil
\subfloat[Inductive Task in Wikipedia]{
\centering
\begin{minipage}{0.4\textwidth}
\includegraphics[scale = 0.32]{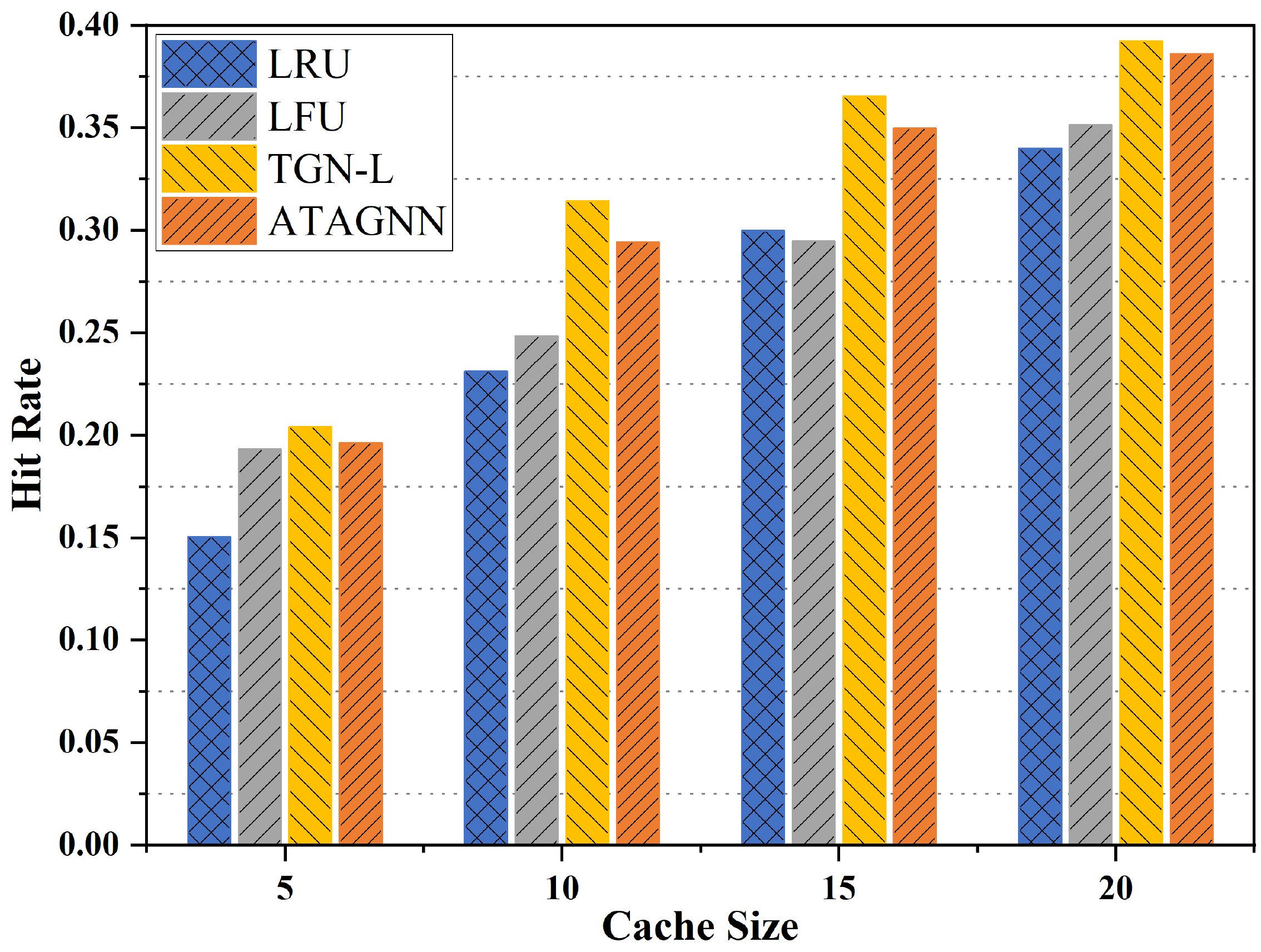}
\end{minipage}
}

\subfloat[Transductive Task in MOOC]{
\centering
\begin{minipage}{0.4\textwidth}
\includegraphics[scale = 0.32]{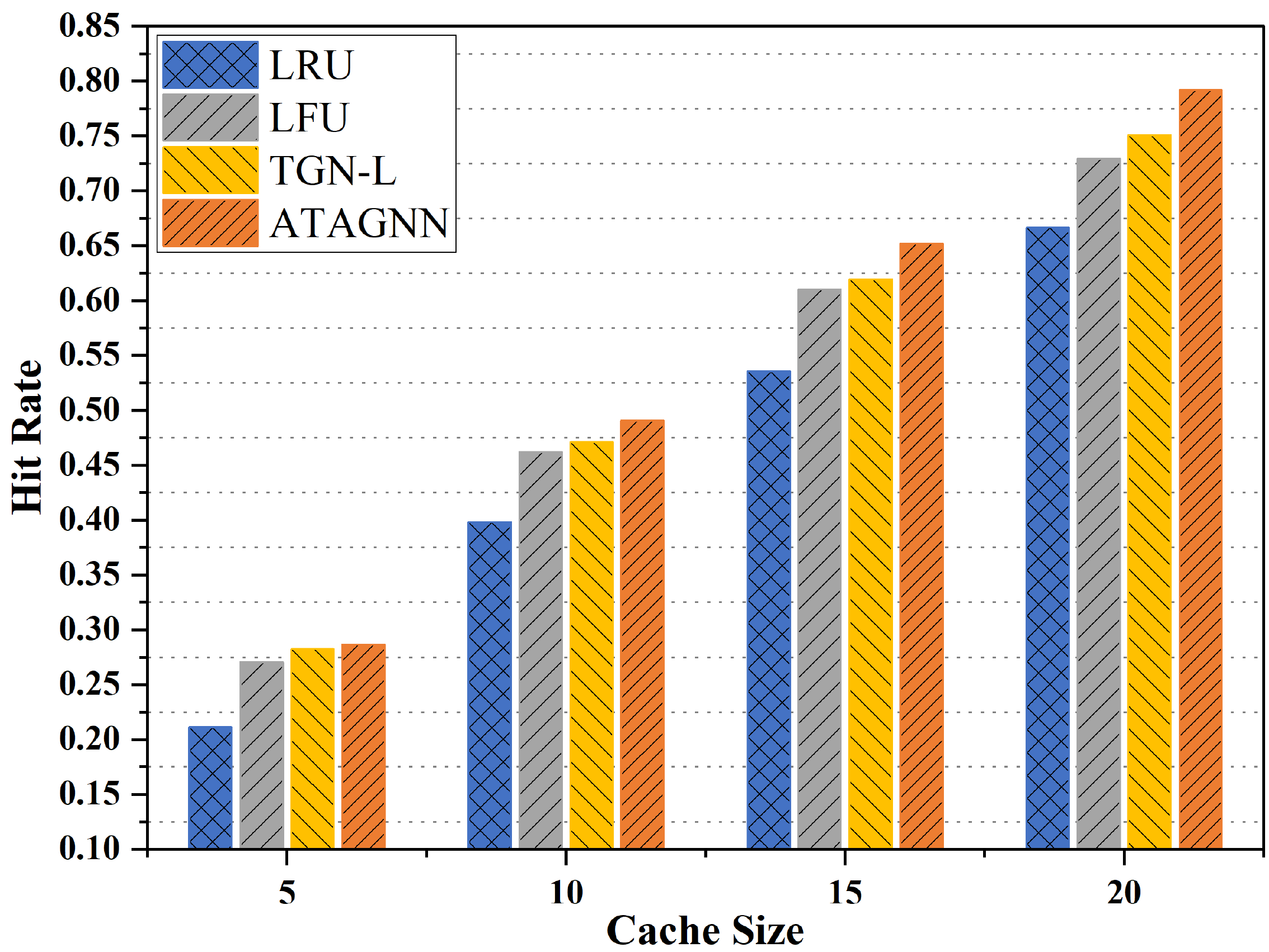}
\end{minipage}
}
\hfil
\subfloat[Inductive Task in MOOC]{
\centering
\begin{minipage}{0.4\textwidth}
\includegraphics[scale = 0.32]{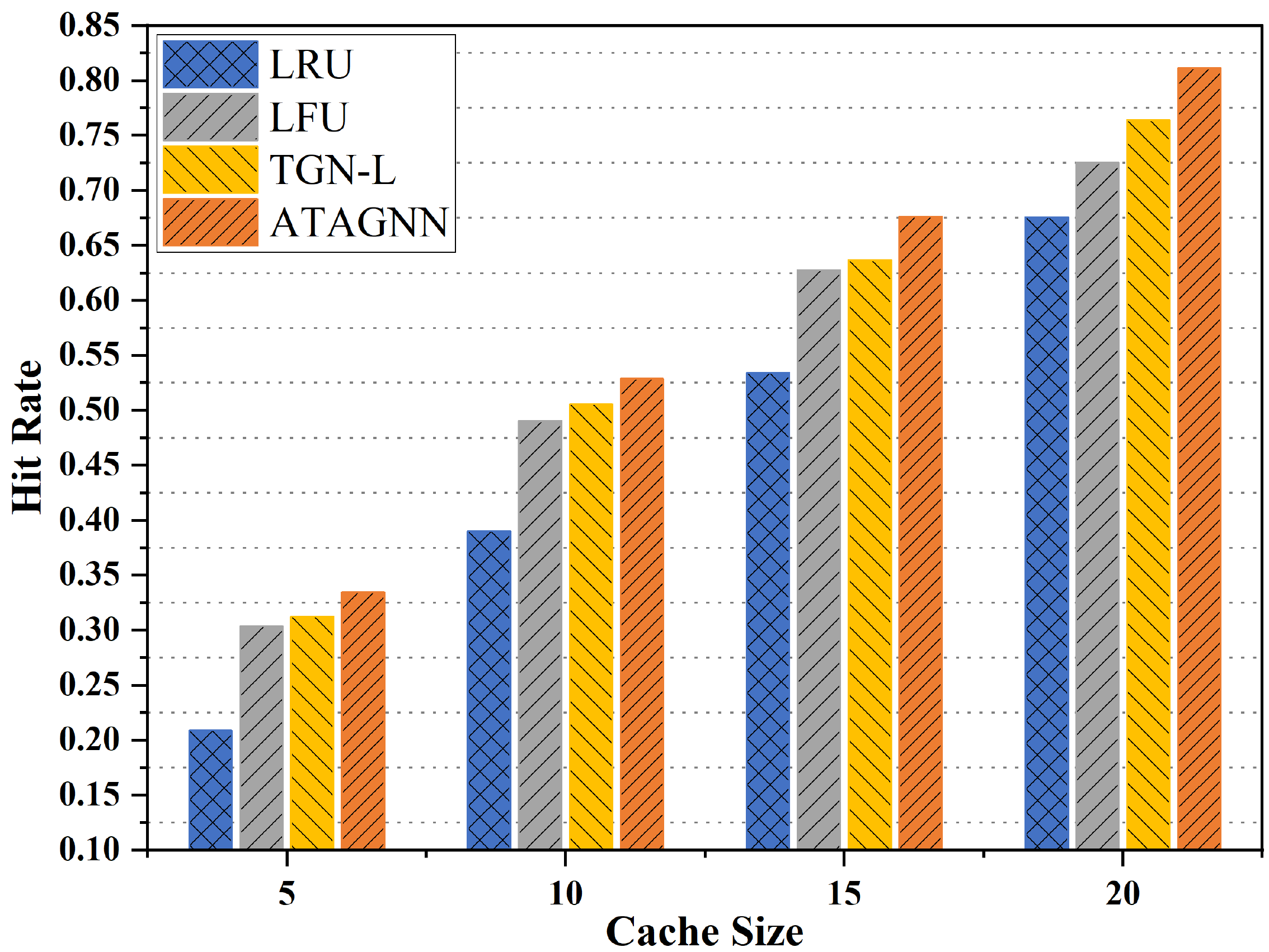}
\end{minipage}
}
\caption{Hit rate comparison with different cache sizes in different datasets.}  
\label{fig_re3}
\end{figure*}

\begin{figure*}[!t]
\subfloat[Transductive Task]{
\begin{minipage}{0.4\textwidth}
\includegraphics[scale = 0.345]{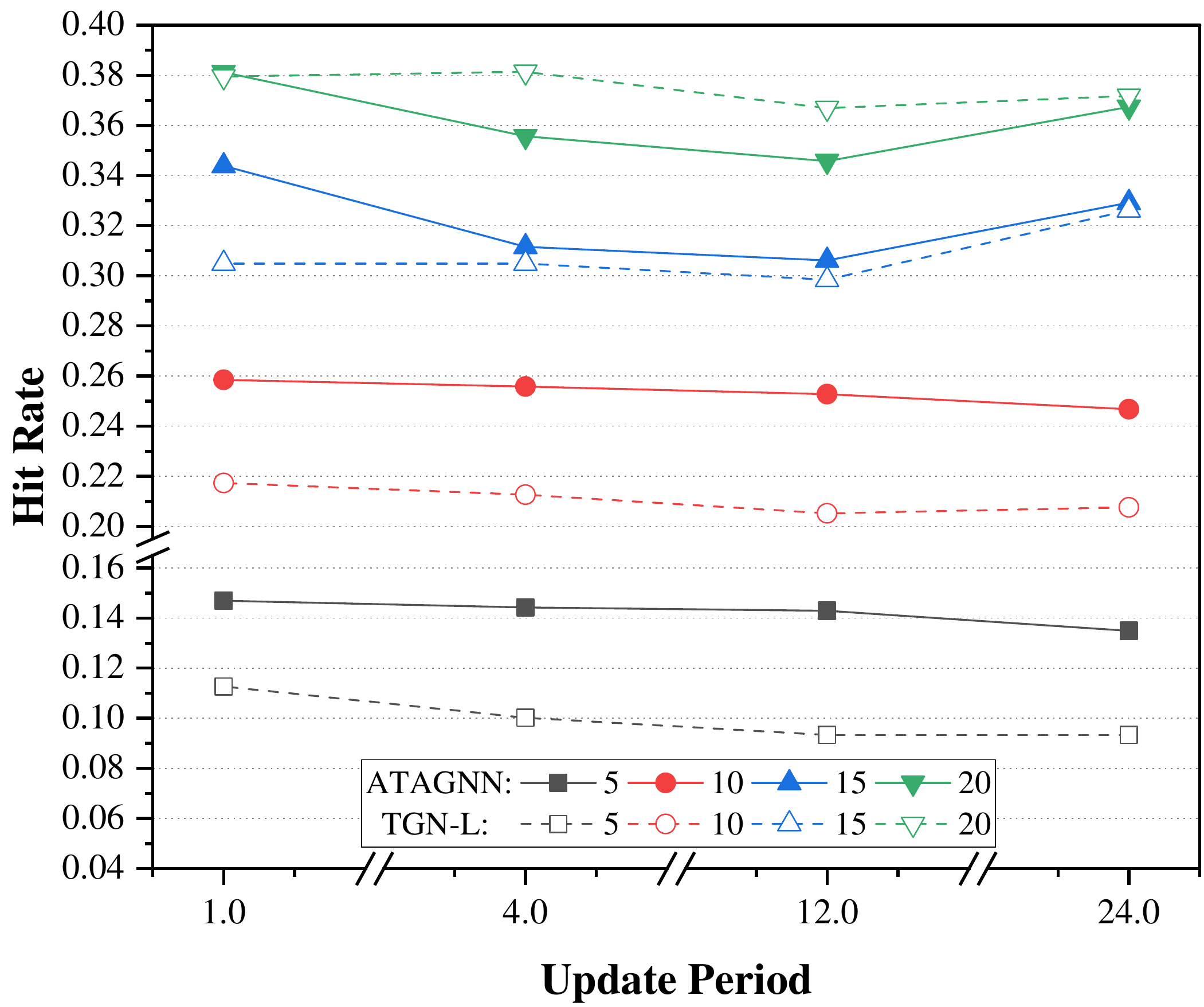}
\end{minipage}
}
\hfil
\subfloat[Inductive Task]{
\begin{minipage}{0.4\textwidth}
\includegraphics[scale = 0.345]{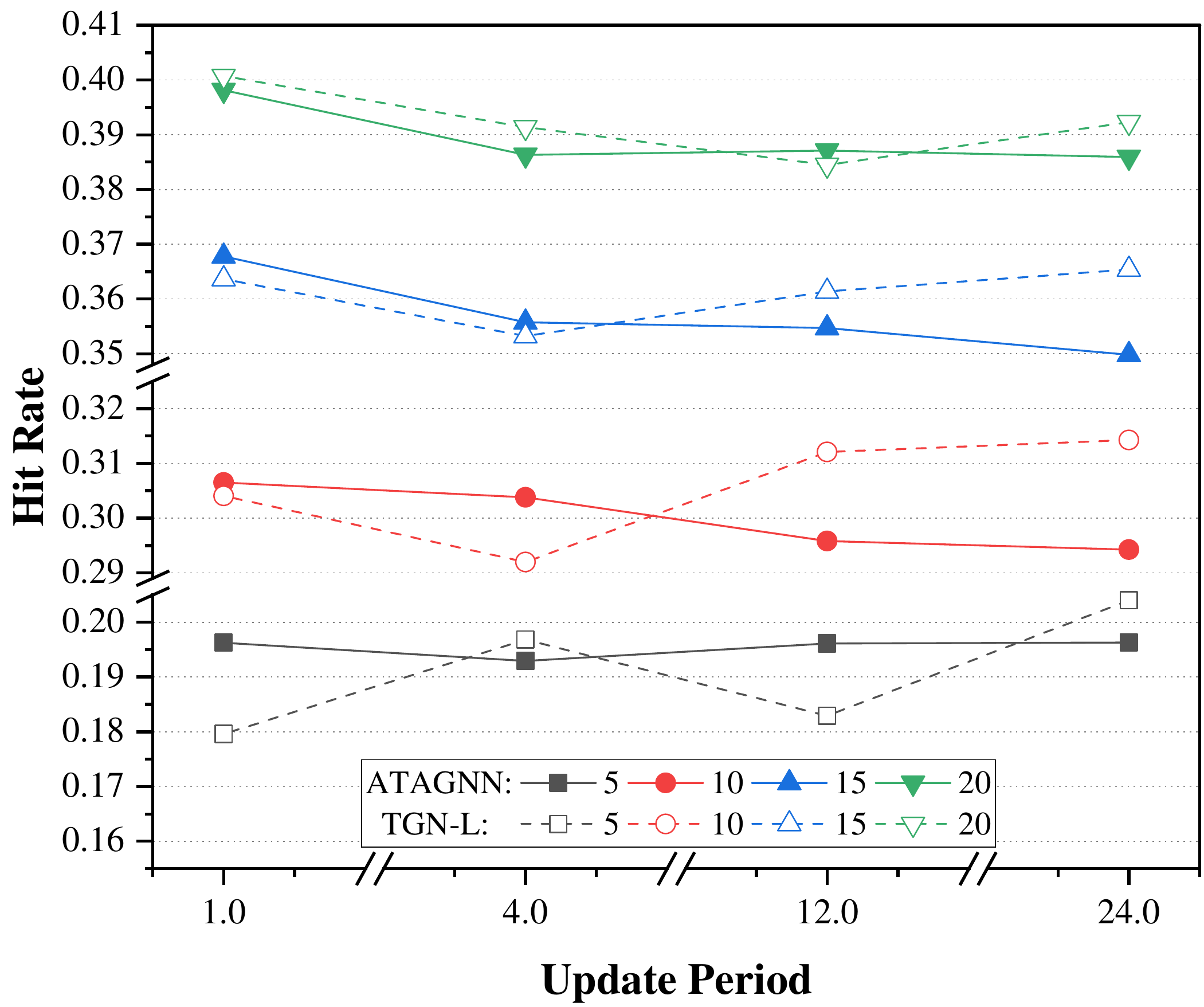}
\end{minipage}
}
\caption{Hit rate comparison with different update periods in Wikipedia.}  
\label{fig_re4}
\end{figure*}

\textbf{Caching Policy Setting:}
When we deploy our DGNN model to produce the caching policy, unlike in the validation and test exams, we have little prior knowledge about the users, content and the possible timestamps of events. Inspired by LFU, we narrow the prediction scope by only choosing those entities that have been observed in last one or two hours when the number of candidates is too large. We also design a caching algorithm that relies entirely on the results from the aforementioned model. The main idea of our policy is shown in Algorithm \ref{alg:algorithm2}. It caches the contents from all candidate items by counting the possible accessible actions with Eq. \eqref{eq0} as well as the results of our model and generating ``fake requests" to download the popular content in advance. Moreover, if we update the memory with those fake requests, they may mislead our subsequent prediction. Thus, we need to update the memory with real interactions periodically, and call it the memory update period, $T_u$.

We compare our popularity-based scheme with LRU and LFU. LRU updates the caching by replacing the content that has not been requested for a longest time, while LFU always tries to keep those that have been most requested. Moreover, to show the superiority of our ATAGNN over other models in caching, we also compare it with the algorithm based on the best state-of-the-art model, TGN-L.

\textbf{Cache-Hit Efficiency:} Due to the lack of information for predicting with LRU and LFU in the first hour, the comparison with the traditional schemes is completed based on the testing data of MOOC and Wikipedia within 23 hours and the cache size is set as 15. There are 55 content in MOOC and more than 500 items in Wikipedia with around 2000 users are involved in our caching test.

Besides, the results of hit rate with various caching spaces are evaluated as well. On the other hand, in all the aforementioned tests, the default models' updating period for memory buffer is 24 hours, i.e., it will generate all results at once and never update the model's memory in our simulation. Therefore we also carry out ablation studies to demonstrate the effectiveness of different memory update periods (e.g., $T_u = 24, 12, 4, 1$).

Moreover, we calculate users' preference to the content every 6-second with flexible thresholds in different datasets and list the ranking of the popularity for each hour. The tests also include transductive and inductive tasks. 

\subsection{Results Analysis}\label{5C}
\textbf{Prediction result:}
Table \ref{table_1} presents the prediction accuracy. It can be observed that compared with some latest models based on DGNN or RNN, an attention mechanism based TGN model can provide deeper insights into the temporal information, which leads to better results in both inductive and transductive tasks, even without AoI. Especially in the MOOC dataset, the average precision will reach an improvement of about 2\% after being integrated with an attention mechanism to aggregate historical interactions. Meanwhile, Table \ref{tab:my-table} demonstrates the positive impact of AoI, when the age threshold is 3600s, and clearly presents that our adaptive model can still win superior performance.

The results of TGN-L and TGN-M as well as our models shown in Table \ref{table_1} also prove that the number of aggregated information is an important factor that affects the results. Besides, it is apparent that after taking the AoI as a reference for choosing information, even without the Eq. \eqref{eq5}, our model is able to reach further improvement, since it effectively excludes some stale information for certain vertexes. Furthermore, due to the introduction of Eq. \eqref{eq5}, the model can slightly alleviate the error caused by the calculation of the threshold in the two-layer MLP, especially in transductive tasks.

Fig. \ref{fig_re2} and Fig. \ref{fig_re3} investigate the caching performance sensitiveness of our method. Fig. \ref{fig_re2} reveals the evolution of the caching strategy based on our model and the best state-of-the-art algorithm TGN-L as well as the traditional caching algorithms' hit rate within 23 hours with a cache space of 15. Thanks to the sufficient aggregation of historical information, ATAGNN based caching is able to keep superior for a long time, especially in Wikipedia. Besides, the smooth performance within the MOOC dataset also implies the stability of our models over traditional methods in caching. Fig. \ref{fig_re3} shows the average hit rate of our caching policy with different maximum cache sizes (i.e., 5, 10, 15, 20). It can be observed that our model can always provide a high-confidence prediction result of the popularity for caching and greatly improve the hit rate in most cases. Moreover, as the cache space increases, the effect will be further improved. As for the inductive task, even the prediction result is not as good as the transductive one, our model can always keep its leading role. 

\begin{table}[]
\renewcommand\arraystretch{1.2}
\centering
\caption{The performance of MOOC dataset with two different memory update periods, (i.e., 24 hours and 1 hour), and different user sets.}
\label{tab:my-table1}
\scalebox{0.75}{%
\begin{tabular}{cc|cccc|cccc}
\hline
\multicolumn{2}{c|}{Dataset} & \multicolumn{4}{c|}{Transductive} & \multicolumn{4}{c}{Inductive} \\ \hline
\multicolumn{2}{c|}{Model} & \multicolumn{2}{c}{TGN-L} & \multicolumn{2}{c|}{ATAGNN} & \multicolumn{2}{c}{TGN-L} & \multicolumn{2}{c}{ATAGNN} \\ \hline
\multicolumn{2}{c|}{Cache Size} & 24 & 1 & 24 & 1 & 24 & 1 & 24 & 1 \\ \hline
\multicolumn{1}{c|}{\multirow{3}{*}{\begin{tabular}[c]{@{}c@{}}Past \\ users\end{tabular}}} & 10 & 47.106 & 46.527 & 49.074 & 48.848 & 50.555 & 49.374 & 52.870 & 53.182 \\
\multicolumn{1}{c|}{} & 15 & 61.944 & 62.097 & 65.190 & 65.860 & 63.673 & 62.290 & 67.598 & 67.025 \\
\multicolumn{1}{c|}{} & 20 & 75.057 & 74.486 & 79.170 & 77.979 & 76.384 & 72.942 & 81.114 & 79.947 \\ \hline
\multicolumn{1}{c|}{\multirow{3}{*}{\begin{tabular}[c]{@{}c@{}}Future\\ users\end{tabular}}} & 10 & 53.111 & 53.020 & 51.620 & 52.234 & 55.311 & 53.983 & 56.280 & 57.807 \\
\multicolumn{1}{c|}{} & 15 & 68.755 & 68.228 & 69.404 & 70.314 & 70.040 & 67.640 & 71.896 & 75.235 \\
\multicolumn{1}{c|}{} & 20 & 81.548 & 80.788 & 83.284 & 84.685 & 80.078 & 77.653 & 84.684 & 87.397 \\ \hline
\end{tabular}%
}
\end{table}

On the other hand, Fig. \ref{fig_re4} displays the comparison between our model and TGN-L concerning about the Wikipedia dataset with various update periods. The performance of both models gradually decreases with the increase of update period, and ATAGNN finally surpasses TGN-L at an update frequency of every hour. In other words, our model relies more on the history messages and an appropriate memory update is more essential and influential for our ATAGNN model when TGN-L also owns a similar prediction power. 

Moreover, we also examine the performance of MOOC dataset with different memory update period (i.e., $T_u = 24,\ 1$). As shown in Table \ref{tab:my-table1}, when the predictions are carried out based on the users that have requested content in the last one hour, ATAGNN can always precede TGN-L. However, unlike the results in Wikipedia, the performance of MOOC with $T_u=24$ is better than that with $T_u = 1$. On the other hand, if we have prior knowledge about the users, we can discover that the results have the same trend as Wikipedia and are much better than the former setting. We also discover that the distribution of users requesting content within two consecutive hours in each hour is more fluctuating in MOOC, which brings more errors when the prediction performance is not as good as in Wikipedia.

\section{Conclusion}\label{sec6}
In this paper, we develop an AoI-based temporal attention graph neural network (ATAGNN) to maximize the precision of users' interest prediction in ICN. By aggregating the history interaction messages with self-attention mechanism, the model is able to generate a vector with rich temporal features. Furthermore, in order to tackle the problem of staleness, the concept of AoI is specifically introduced to exclude stale information for better refining history. The results based on two real-world datasets prove the superiority of the ATAGNN model over other neural network models. Because of its superior performance, a caching strategy totally based on the ATAGNN's prediction results with an appropriate memory update period also wins a great improvement compared to the traditional algorithms or the best baseline based method. 

\begin{appendices}
\section{Proof of the Effectiveness of the Age Filter}\label{APP:a}
{
According to the results from TGN-L and TGN-M \cite{rossi2020temporal} in Table \ref{table_1}, we can observe that the most recent message is more important on the final predictions than averaging all history features. On the one hand, the difference may derive from that averaging smooths historical features too much. On the other hand, the influence of a message may be inversely proportional to its age and some are too old to be positive. The results of Table \ref{tab:my-table} also prove this. 

\begin{theorem}
We further assume that we can mapping the age of information to the effectiveness to the current moment with a function $\mathcal{P}(a_m)$, where $a_m$ is the age of $m$-th history. Due to the self-attention mechanism, all the chosen messages' influence can be denoted as $\sum^{N}_{m=0}\beta_m \mathcal{P}(a_m)$, where $\beta_m$ is the attention coefficient we obtain from Eq. \eqref{eq3.1} and $N$ is the selected number. We believe when the proper threshold $a_M$ satisfy $\mathcal{P}(a_M) \leq \frac{1}{Z} \sum^{N'}_{m'=0}\Delta_{\beta_m'} \mathcal{P}(a_{m'})$, it will filter useful historical messages and improve our final predictions.

\end{theorem}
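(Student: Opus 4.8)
The plan is to read ``improve our final predictions'' as increasing the aggregate effectiveness $\sum_m \beta_m \mathcal{P}(a_m)$ that the surviving messages contribute to the memory update, and then to show that age-thresholding achieves exactly this whenever the stated inequality holds. First I would fix an ordering of the $N+1$ candidate messages by increasing age $a_0 \le a_1 \le \cdots \le a_N$ and assume $\mathcal{P}(\cdot)$ is non-increasing, so that discarding messages older than a threshold $a_M$ coincides with discarding the messages of lowest effectiveness. I would then model the soft filter of Eq. \eqref{eq5} as removal of the stale tail $\{N'+1,\dots,N\}$ followed by softmax renormalization of the surviving attention weights, since masking a subset of keys simply rescales the remaining coefficients produced in Eq. \eqref{eq3.1}.

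The core computation is to compare the pre-filter influence $S=\sum_{m=0}^{N}\beta_m\mathcal{P}(a_m)$ with the post-filter influence $S'=\frac{1}{1-W}\sum_{m=0}^{N'}\beta_m\mathcal{P}(a_m)$, where $W=\sum_{m>N'}\beta_m$ is the total attention mass of the discarded tail. Writing $A$ for the retained influence and $B$ for the discarded influence, an elementary rearrangement collapses $S'\ge S$ to the single condition $W\ge B/(A+B)$, i.e. the discarded messages carry a larger share of the attention mass than of the effectiveness-weighted influence. This pins down the precise sense in which removing stale messages helps: it helps exactly when the removed messages lie below the batch-average effectiveness.

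The remaining step is to identify the author's threshold condition with this criterion. Here $\Delta_{\beta_{m'}}$ is the increase $\beta_{m'}/(1-W)-\beta_{m'}$ that renormalization confers on each surviving message, its normalizer is $Z=\sum_{m'}\Delta_{\beta_{m'}}=W$, and a direct substitution gives $\frac{1}{Z}\sum_{m'=0}^{N'}\Delta_{\beta_{m'}}\mathcal{P}(a_{m'})=\frac{1}{1-W}\sum_{m'=0}^{N'}\beta_{m'}\mathcal{P}(a_{m'})=S'$. Thus the stated inequality reads $\mathcal{P}(a_M)\le S'$: the effectiveness at the threshold age does not exceed the post-filter average effectiveness. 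By monotonicity this forces every discarded message to satisfy $\mathcal{P}(a_m)\le\mathcal{P}(a_M)\le S'$, whence $B\le \mathcal{P}(a_M)\,W\le S'\,W=\frac{A}{1-W}W$, which rearranges to $W\ge B/(A+B)$ and hence $S'\ge S$, closing the loop as a self-consistent condition on the chosen threshold.

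I expect the main obstacle to be the modeling gap rather than the algebra. The statement is phrased informally (``we believe'', ``improve our final predictions''), so the crux is to justify that aggregate effectiveness $\sum_m\beta_m\mathcal{P}(a_m)$ is a faithful proxy for prediction quality and that a genuinely monotone effectiveness function $\mathcal{P}$ exists. For both assumptions I would lean on the empirical evidence already in the paper: the gap between TGN-L and TGN-M in Table \ref{table_1} shows that freshness-weighted aggregation beats uniform averaging, and the fixed-threshold sweep in Table \ref{tab:my-table} shows accuracy degrading as stale messages are admitted, which is exactly what a decreasing $\mathcal{P}$ coupled to the proxy would predict.
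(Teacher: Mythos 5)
Your proof is correct, but it takes a genuinely different route from the paper's. The paper leaves the post-filter attention coefficients $\beta'_{m'}$ completely abstract: it writes the improvement condition \eqref{oeq}, splits the original sum into retained and discarded parts, and then crudely relaxes the discarded sum $\sum_{m=M}^{N}\beta_m\mathcal{P}(a_m)$ upward by $(N-M+1)\hat{\beta}\,\mathcal{P}(a_M)$, where $\hat{\beta}$ is the largest discarded coefficient; this is where its normalizer $Z=(N-M+1)\hat{\beta}$ comes from in \eqref{relax}. You instead commit to a concrete model of what filtering does to the attention weights --- masked softmax renormalization, $\beta'_{m'}=\beta_{m'}/(1-W)$ with $W$ the discarded mass --- which lets you compute $\Delta_{\beta_{m'}}$ explicitly, deduce $Z=\sum_{m'}\Delta_{\beta_{m'}}=W$, and obtain an exact characterization ($S'\geq S$ iff $W\geq B/(A+B)$) rather than a one-sided relaxation. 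Note that your $Z$ and the paper's $Z$ are different quantities (the theorem statement never defines $Z$; the paper only fixes it inside the proof), and since $W\leq(N-M+1)\hat{\beta}$, your threshold condition is weaker than the paper's, so within your model you actually prove a stronger sufficiency result. What each approach buys: the paper's bound is model-agnostic --- it holds no matter how the new coefficients arise, e.g.\ after the timestamp rescaling of Eq.~\eqref{eq5} perturbs the softmax scores themselves --- at the cost of looseness; yours is tight and gives a clean interpretation (filtering helps exactly when the discarded messages' share of attention mass exceeds their share of effectiveness-weighted influence), at the cost of assuming filtering acts as pure masking with the scores of Eq.~\eqref{eq3.1} unchanged, which the soft filter of Eq.~\eqref{eq5} only approximates. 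Your closing caveat about the informal proxy (aggregate effectiveness standing in for prediction quality, monotone $\mathcal{P}$ existing) applies equally to the paper's own proof, so it is not a gap relative to the paper.
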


\begin{proof}
If we want to filter the valuable information with its age adaptively, the total effectiveness of historical information turns into $\sum^{N'}_{m'=0}\beta'_{m'} \mathcal{P}(a_{m'})$. And $N \geq N' = M-1$. If the final results improve, we have the follow condition:
\begin{equation}
\begin{aligned}
\label{oeq}
\sum^{N'}_{m'=0}\beta'_{m'} \mathcal{P}(a_{m'}) - \sum^{N}_{m=0}\beta_m \mathcal{P}(a_m) \geq 0\\
\end{aligned}
\end{equation}
And it can also be considered as:
\begin{equation}
    \begin{aligned}
    \sum^{N'}_{m'=0}(\beta'_{m'} - \beta_{m'} )\mathcal{P}(a_{m'}&) -\sum^{N}_{m=M}\beta_m \mathcal{P}(a_m) \geq 0\\
    \sum^{N'}_{m'=0}(\beta'_{m'}  - \beta_{m'} )\mathcal{P}(a&_{m'}) \geq \sum^{N}_{m=M}\beta_m \mathcal{P}(a_m) 
    \end{aligned}
\end{equation}
where $a_M$ is the threshold that we need. To solve this inequality, we can relax the components at the right side:
\begin{equation}
\label{relax}
\begin{aligned}
\sum^{N'}_{m'=0}(\beta'_{m'} - \beta_{m'}) \mathcal{P}(a_{m'})& \geq (N-M+1)\hat{\beta} \mathcal{P}(a_M)\\
\mathcal{P}(a_M) \leq \frac{1}{Z} &\sum^{N'}_{m'=0}\Delta_{\beta_m'} \mathcal{P}(a_{m'})
\end{aligned}
\end{equation}
where $\hat{\beta}$ is the maximum value in $a_m$ for $m\in (M, M+1 ,\cdots,N)$ and $Z=(N-M+1)\hat{\beta}$, $\Delta_{\beta_m'} = \beta'_{m'} - \beta_{m'}$. Moreover, due to the negative correlation between AoI and its effectiveness, $\mathcal{P}(a_M)$ is also the maximum one. Obviously, if Eq. \eqref{relax} is satisfied, the \eqref{oeq} will be satisfied as well. 
\end{proof}

\begin{remark}
In practice, we find that we can solve this inequality by using a two-layer MLP. We hope the first layer can achieve the component of right side and the second layer can fit the inverse function of the mapping between age and effectiveness. On the other hand, Eq. \eqref{eq5} is adopted to make the backpropagation much easier.
\end{remark}
}
\end{appendices}

\bibliographystyle{IEEEtran}
\bibliography{article}

\begin{thebibliography}{10}
\providecommand{\url}[1]{#1}
\csname url@samestyle\endcsname
\providecommand{\newblock}{\relax}
\providecommand{\bibinfo}[2]{#2}
\providecommand{\BIBentrySTDinterwordspacing}{\spaceskip=0pt\relax}
\providecommand{\BIBentryALTinterwordstretchfactor}{4}
\providecommand{\BIBentryALTinterwordspacing}{\spaceskip=\fontdimen2\font plus
\BIBentryALTinterwordstretchfactor\fontdimen3\font minus
  \fontdimen4\font\relax}
\providecommand{\BIBforeignlanguage}[2]{{%
\expandafter\ifx\csname l@#1\endcsname\relax
\typeout{** WARNING: IEEEtran.bst: No hyphenation pattern has been}%
\typeout{** loaded for the language `#1'. Using the pattern for}%
\typeout{** the default language instead.}%
\else
\language=\csname l@#1\endcsname
\fi
#2}}
\providecommand{\BIBdecl}{\relax}
\BIBdecl

\bibitem{cisco2020cisco}
U.~Cisco, ``Cisco annual internet report (2018--2023) white paper,''
  \emph{Online](accessed March 26, 2021) https://www. cisco.
  com/c/en/us/solutions/collateral/executive-perspectives/annual-internet-report/whitepaper-c11-741490.
  html}, 2020.

\bibitem{li2017cooperative}
Q.~Li, W.~Shi, X.~Ge, and Z.~Niu, ``Cooperative edge caching in
  software-defined hyper-cellular networks,'' \emph{IEEE Journal on Selected
  Areas in Communications}, vol.~35, no.~11, pp. 2596--2605, 2017.

\bibitem{gu2020distributed}
H.~Gu and H.~Wang, ``A distributed caching scheme using non-cooperative game
  for mobile edge networks,'' \emph{IEEE Access}, vol.~8, pp.
  142\,747--142\,757, 2020.

\bibitem{cha2007tube}
M.~Cha, H.~Kwak, P.~Rodriguez, Y.-Y. Ahn, and S.~Moon, ``I tube, you tube,
  everybody tubes: analyzing the world's largest user generated content video
  system,'' in \emph{Proceedings of the 7th ACM SIGCOMM conference on Internet
  measurement}, 2007, pp. 1--14.

\bibitem{liao2020cognitive}
S.~Liao, J.~Wu, J.~Li, A.~K. Bashir, S.~Mumtaz, A.~Jolfaei, and N.~Kvedaraite,
  ``Cognitive popularity based {AI} service sharing for software-defined
  information-centric networks,'' \emph{IEEE Transactions on Network Science
  and Engineering}, vol.~7, no.~4, pp. 2126--2136, 2020.

\bibitem{yang2018content}
P.~Yang, N.~Zhang, S.~Zhang, L.~Yu, J.~Zhang, and X.~Shen, ``Content popularity
  prediction towards location-aware mobile edge caching,'' \emph{IEEE
  Transactions on Multimedia}, vol.~21, no.~4, pp. 915--929, 2018.

\bibitem{paschos2018role}
G.~S. Paschos, G.~Iosifidis, M.~Tao, D.~Towsley, and G.~Caire, ``The role of
  caching in future communication systems and networks,'' \emph{IEEE Journal on
  Selected Areas in Communications}, vol.~36, no.~6, pp. 1111--1125, 2018.

\bibitem{zhang2019sdn}
Z.~Zhang, C.-H. Lung, M.~St-Hilaire, and I.~Lambadaris, ``An sdn-based caching
  decision policy for video caching in information-centric networking,''
  \emph{IEEE Transactions on Multimedia}, vol.~22, no.~4, pp. 1069--1083, 2019.

\bibitem{lederer2014adaptive}
S.~Lederer, C.~Mueller, C.~Timmerer, and H.~Hellwagner, ``Adaptive multimedia
  streaming in information-centric networks,'' \emph{IEEE Network}, vol.~28,
  no.~6, pp. 91--96, 2014.

\bibitem{ahlgren2012survey}
B.~Ahlgren, C.~Dannewitz, C.~Imbrenda, D.~Kutscher, and B.~Ohlman, ``A survey
  of information-centric networking,'' \emph{IEEE Communications Magazine},
  vol.~50, no.~7, pp. 26--36, 2012.

\bibitem{somuyiwa2018reinforcement}
S.~O. Somuyiwa, A.~Gy{\"o}rgy, and D.~G{\"u}nd{\"u}z, ``A
  reinforcement-learning approach to proactive caching in wireless networks,''
  \emph{IEEE Journal on Selected Areas in Communications}, vol.~36, no.~6, pp.
  1331--1344, 2018.

\bibitem{chen2020content}
Q.~Chen, W.~Wang, F.~R. Yu, M.~Tao, and Z.~Zhang, ``Content caching oriented
  popularity prediction: A weighted clustering approach,'' \emph{IEEE
  Transactions on Wireless Communications}, vol.~20, no.~1, pp. 623--636, 2020.

\bibitem{lee2001lrfu}
D.~Lee, J.~Choi, J.-H. Kim, S.~H. Noh, S.~L. Min, Y.~Cho, and C.~S. Kim,
  ``{LRFU}: A spectrum of policies that subsumes the least recently used and
  least frequently used policies,'' \emph{IEEE transactions on Computers},
  vol.~50, no.~12, pp. 1352--1361, 2001.

\bibitem{dan1990approximate}
A.~Dan and D.~Towsley, ``An approximate analysis of the {LRU} and {FIFO} buffer
  replacement schemes,'' in \emph{Proceedings of the 1990 ACM SIGMETRICS
  conference on Measurement and modeling of computer systems}, 1990, pp.
  143--152.

\bibitem{zhou2020graph}
J.~Zhou, G.~Cui, S.~Hu, Z.~Zhang, C.~Yang, Z.~Liu, L.~Wang, C.~Li, and M.~Sun,
  ``Graph neural networks: A review of methods and applications,'' \emph{AI
  Open}, vol.~1, pp. 57--81, 2020.

\bibitem{velivckovic2017graph}
P.~Veli{\v{c}}kovi{\'c}, G.~Cucurull, A.~Casanova, A.~Romero, P.~Lio, and
  Y.~Bengio, ``Graph attention networks,'' in \emph{International Conference on
  Learning Representations}, 2017.

\bibitem{fan2019graph}
W.~Fan, Y.~Ma, Q.~Li, Y.~He, E.~Zhao, J.~Tang, and D.~Yin, ``Graph neural
  networks for social recommendation,'' in \emph{The World Wide Web
  Conference}, 2019, pp. 417--426.

\bibitem{Xu2020Inductive}
D.~Xu, chuanwei ruan, evren korpeoglu, sushant kumar, and kannan achan,
  ``Inductive representation learning on temporal graphs,'' in
  \emph{International Conference on Learning Representations}, 2020.

\bibitem{sun2017update}
Y.~Sun, E.~Uysal-Biyikoglu, R.~D. Yates, C.~E. Koksal, and N.~B. Shroff,
  ``Update or wait: How to keep your data fresh,'' \emph{IEEE Transactions on
  Information Theory}, vol.~63, no.~11, pp. 7492--7508, 2017.

\bibitem{ming2014age}
Z.~Ming, M.~Xu, and D.~Wang, ``Age-based cooperative caching in
  information-centric networking,'' in \emph{2014 23rd International Conference
  on Computer Communication and Networks (ICCCN)}.\hskip 1em plus 0.5em minus
  0.4em\relax IEEE, 2014, pp. 1--8.

\bibitem{wu2014scaling}
Y.~Wu, C.~Wu, B.~Li, L.~Zhang, Z.~Li, and F.~C. Lau, ``Scaling social media
  applications into geo-distributed clouds,'' \emph{IEEE/ACM Transactions On
  Networking}, vol.~23, no.~3, pp. 689--702, 2014.

\bibitem{zhang2017ppc}
Y.~Zhang, X.~Tan, and W.~Li, ``{PPC}: Popularity prediction caching in icn,''
  \emph{IEEE Communications Letters}, vol.~22, no.~1, pp. 5--8, 2017.

\bibitem{mehrizi2019feature}
S.~Mehrizi, A.~Tsakmalis, S.~Chatzinotas, and B.~Ottersten, ``A feature-based
  bayesian method for content popularity prediction in edge-caching networks,''
  in \emph{2019 IEEE Wireless Communications and Networking Conference
  (WCNC)}.\hskip 1em plus 0.5em minus 0.4em\relax IEEE, 2019, pp. 1--6.

\bibitem{lee2020t}
S.~Lee, I.~Yeom, and D.~Kim, ``T-caching: enhancing feasibility of in-network
  caching in icn,'' \emph{IEEE Transactions on Parallel and Distributed
  Systems}, vol.~31, no.~7, pp. 1486--1498, 2020.

\bibitem{DBLP:journals/corr/HidasiKBT15}
B.~Hidasi, A.~Karatzoglou, L.~Baltrunas, and D.~Tikk, ``Session-based
  recommendations with recurrent neural networks,'' in \emph{International
  Conference on Learning Representations}, 2016.

\bibitem{wu2019session}
S.~Wu, Y.~Tang, Y.~Zhu, L.~Wang, X.~Xie, and T.~Tan, ``Session-based
  recommendation with graph neural networks,'' in \emph{Proceedings of the AAAI
  Conference on Artificial Intelligence}, vol.~33, no.~01, 2019, pp. 346--353.

\bibitem{wang2020multi}
X.~Wang, R.~Wang, C.~Shi, G.~Song, and Q.~Li, ``Multi-component graph
  convolutional collaborative filtering,'' in \emph{Proceedings of the AAAI
  Conference on Artificial Intelligence}, vol.~34, no.~04, 2020, pp.
  6267--6274.

\bibitem{sankar2019dynamic}
A.~Sankar, Y.~Wu, L.~Gou, W.~Zhang, and H.~Yang, ``Dynamic graph representation
  learning via self-attention networks,'' \emph{arXiv preprint
  arXiv:1812.09430}, 2018.

\bibitem{skardinga2021foundations}
J.~Skardinga, B.~Gabrys, and K.~Musial, ``Foundations and modelling of dynamic
  networks using dynamic graph neural networks: A survey,'' \emph{IEEE Access},
  2021.

\bibitem{trivedi2019dyrep}
R.~Trivedi, M.~Farajtabar, P.~Biswal, and H.~Zha, ``Dyrep: Learning
  representations over dynamic graphs,'' in \emph{International Conference on
  Learning Representations}, 2019.

\bibitem{vaswani2017attention}
A.~Vaswani, N.~Shazeer, N.~Parmar, J.~Uszkoreit, L.~Jones, A.~N. Gomez,
  {\L}.~Kaiser, and I.~Polosukhin, ``Attention is all you need,'' in
  \emph{Advances in neural information processing systems}, 2017, pp.
  5998--6008.

\bibitem{rossi2020temporal}
E.~Rossi, B.~Chamberlain, F.~Frasca, D.~Eynard, F.~Monti, and M.~Bronstein,
  ``Temporal graph networks for deep learning on dynamic graphs,'' \emph{arXiv
  preprint arXiv:2006.10637}, 2020.

\bibitem{hamilton2017inductive}
W.~L. Hamilton, R.~Ying, and J.~Leskovec, ``Inductive representation learning
  on large graphs,'' in \emph{Proceedings of the 31st International Conference
  on Neural Information Processing Systems}, 2017, pp. 1025--1035.

\bibitem{he2016deep}
K.~He, X.~Zhang, S.~Ren, and J.~Sun, ``Deep residual learning for image
  recognition,'' in \emph{Proceedings of the IEEE conference on computer vision
  and pattern recognition}, 2016, pp. 770--778.

\bibitem{cho2014learning}
K.~Cho, B.~Van~Merri{\"e}nboer, C.~Gulcehre, D.~Bahdanau, F.~Bougares,
  H.~Schwenk, and Y.~Bengio, ``Learning phrase representations using rnn
  encoder-decoder for statistical machine translation,'' \emph{arXiv preprint
  arXiv:1406.1078}, 2014.

\bibitem{kosta2017age}
A.~Kosta, N.~Pappas, and V.~Angelakis, ``Age of information: A new concept,
  metric, and tool,'' \emph{Foundations and Trends in Networking}, vol.~12,
  no.~3, pp. 162--259, 2017.

\end{thebibliography}
\end{document}